\newcommand{\tabincell}[2]{\begin{tabular}{@{}#1@{}}#2\end{tabular}}
\newtheorem{theorem}{\indent Theorem}
\newtheorem{definition}{\indent Definition}
\theoremstyle{remark}
\newtheorem{remark}{\indent Remark}
\title{Rule-Based Reinforcement Learning for Efficient Robot Navigation with Space Reduction}
\author{Yuanyang~Zhu,~\IEEEmembership{Student Member,~IEEE}, Zhi~Wang,~\IEEEmembership{Member,~IEEE}, Chunlin~Chen,~\IEEEmembership{Senior Member,~IEEE}, Daoyi~Dong,~\IEEEmembership{Senior Member,~IEEE}
	\thanks{Accepted by \textit{IEEE/ASME Transactions on Mechatronics}, 2021, DOI: 10.1109/TMECH.2021.3072675.
	This work was supported in part by the National Natural Science Foundation of China (Nos. 71732003, 62073160 \& 62006111), the National Key Research and Development Program of China (No. 2018AAA0101100), the Synergistic Innovation Center of  Jiangsu Modern Agricultural Equipment and Technology (No. 4091600002), and the Australian Research Councils's Discovery Projects funding scheme under Project DP190101566. (\textit{Corresponding author: Zhi Wang}.)}
	
	\thanks{Y. Zhu and Z. Wang are with the Department of Control and Systems Engineering, School of Management and Engineering, Nanjing University, Nanjing 210093, China (e-mail: yuanyang@smail.nju.edu.cn; zhiwang@nju.edu.cn).}
	
	\thanks{C. Chen is with the Department of Control and Systems Engineering, School of Management and Engineering, Nanjing University, Nanjing 210093, China and with the Synergistic Innovation Center of Jiangsu Modern Agricultural Equipment and  Technology, Zhenjiang 212013, China (e-mail: clchen@nju.edu.cn).}
	
	\thanks{D. Dong is with the School of Engineering and Information Technology, University of New South Wales, Canberra, ACT 2600, Australia (email: daoyidong@gmail.com).}}
\begin{document}
\maketitle
\begin{abstract}
    For real-world deployments, it is critical to allow robots to navigate in complex environments autonomously.
	Traditional methods usually maintain an internal map of the environment, and then design several simple rules, in conjunction with a localization and planning approach, to navigate through the internal map.
	These approaches often involve a variety of assumptions and prior knowledge.
	In contrast, recent reinforcement learning (RL) methods can provide a model-free, self-learning mechanism as the robot interacts with an initially unknown environment, but are expensive to deploy in real-world scenarios due to inefficient exploration.
	In this paper, we focus on efficient navigation with the RL technique and combine the advantages of these two kinds of methods into a rule-based RL (RuRL) algorithm for reducing the sample complexity and cost of time.
	First, we use the rule of wall-following to generate a closed-loop trajectory. 
	Second, we employ a reduction rule to shrink the trajectory, which in turn effectively reduces the redundant exploration space. 
	Besides, we give the detailed theoretical guarantee that the optimal navigation path is still in the reduced space.
	Third, in the reduced space, we utilize the Pledge rule to guide the exploration strategy for accelerating the RL process at the early stage.
	Experiments conducted on real robot navigation problems in hex-grid environments demonstrate that RuRL can achieve improved navigation performance.
\end{abstract}

\begin{IEEEkeywords}
	Hex-grid, robot navigation, rule-based reinforcement learning, space reduction.
\end{IEEEkeywords}

\section{Introduction}\label{Sec1}
\IEEEPARstart{A}{utonomous} mobile robots are becoming ubiquitous in academia, industrial applications, and our daily life~\citep{faust2018prm,chen2008hybrid}.  
As one of the fundamental topics in the research of mobile robots, robot navigation can be seen as a sequence of translations and rotations for finding the destination, while avoiding obstacles in the environment~\citep{wang2019incremental}.
Enabling mobile robots to perceive and navigate through the surroundings is essential for their successful deployment in real-world scenarios~\citep{7953692}.

Many algorithms have been proposed for path planning and optimization in robot navigation~\cite{holte1996hierarchical}.
Traditional methods usually maintain an internal map of the environment and design simple rules to navigate through the internal map. 
Fuzzy logic methods use fuzzy rules like IF-THEN to make robot navigation decisions~\citep{lilly2007evolution}.
Neuro-fuzzy techniques combine neural networks with fuzzy rules to improve the tracking performance under uncertain physical interaction and external dynamics~\citep{li2020neural,li2020parallel}.
However, it is challenging for human experts to choose the most appropriate rules and membership functions~\citep{fu2018input}.
Another line is to use robotic navigation technologies inspired by biological behavior rules, such as genetic algorithms~\citep{hu2004knowledge}, particle swarm optimization~\citep{hong2019real}, and ant colony optimization (ACO)~\citep{englot2011multi}. 
Owing to the fact that these rules need to know the prior environment model and consider extensive possible situations in advance for mimicking the cognitive process of human experts to solve decision-making problems, rule-based methods tend to converge early to sub-optimal policies~\citep{naser2016ruled}.
Hence, the real-time performance in these methods may not be sufficient to meet the requirements of planning speed and accuracy in path planning tasks~\citep{ju2008evolutionary}.

Recent reinforcement learning (RL) methods offer considerable potentials for mobile robot systems~\citep{dong2012robust}.
RL techniques are obliged to the idea of Markov decision processes (MDPs) that aim to directly solve the optimal sequential decision-making problem of learning from interaction to achieve the goal~\citep{sutton2018reinforcement}. 
By observing the results of navigation decisions in the real world, mobile robots can directly learn from trial-and-error experience, continuously improving their proficiency and adapting to unknown environments~\citep{chen2008hybrid}.
In recent years, RL has been widely investigated in robot navigation domains due to its self-learning and online learning capabilities~\citep{wang2021lifelong}.
However, interacting with the real world can be expensive due to practical constraints such as power usage and human supervision~\citep{pan2018efficient}.
Model-free RL systems are capable of solving complex MDPs in a variety of complex domains, but usually at the cost of a large amount of agent-environment experience due to their limited sample efficiency~\citep{chen2008hybrid}.

Rule-based machine learning (RBML) that combines rules with learning-based methods is a promising direction for utilizing the experts' knowledge to improve the learning performance.
RBML usually covers any machine learning method that identifies, learns, or evolves ``rules" to store, manipulate or apply by the learning system~\citep{bassel2011functional}.
RBML has been widely studied in a variety of fields, such as learning classifier systems, association rule mining, and artificial immune systems, which successfully combines the efficiency of rules and the autonomy of machine learning to complete complex tasks in the real world~\citep{urbanowicz2009learning}.
In the RL community, some researchers employ rules to improve the learning performance in dynamic simulation systems~\citep{munoz2012fuzzy} and robot manipulators' navigation tasks~\citep{althoefer2001reinforcement}.
Nevertheless, few practical implementations of rule-based RL (RuRL) methods have been systematically investigated for robot navigation.

On one hand, traditional rule-based methods generally rely on the environment model and expert knowledge to solve robot navigation tasks, and tend to converge early to sub-optimal policies.
On the other hand, recent RL methods can learn the global optimal policies in a model-free way as the robot interacts with an initially unknown environment, but are expensive to deploy in the real world due to inefficient exploration~\citep{dong2012robust,wang2020incremental}.
In this paper, considering the abilities of the rule-based techniques for logic reasoning and RL methods for solving complex MDPs, we combine the advantages of these two methods into RuRL methods for efficient robot navigation tasks in hex-grid environments.
\footnote{Compared to the triangular and square grids, the hexagonal grid has six equidistant action directions with higher degree of freedom, and may better conform to uneven ground under the same unit area.
The formed trajectory may be smoother in hex-grid maps~\citep{rothman2004lattice}.
Moreover, biological investigations~\citep{hafting2005microstructure} also suggest that neural cognition of spatial navigation is hexagonal.
Hence, we rasterize environments into hexagonal grids here.}

In summary, our main contributions are threefold:
\begin{itemize}
  \item We design the rule of wall-following to obtain a closed-loop trajectory from the starting point to the goal.
We maintain the main angle of view tracking and the priority of action selection strategy to ensure that the mobile robot walks along the left and the right walls, respectively.
\item We use the reduction rule to shrink the trajectory, which effectively reduces the exploration space.
We traverse the obtained trajectory to determine whether there is a shorter path between two given states than the path on the trajectory. 
Besides, we provide the theoretical guarantee that the optimal path is still in the reduced space.
\item We employ the Pledge rule to guide the mobile robot to explore more efficiently at the early learning stage.
Experimental results demonstrate the effectiveness and improved performance of RuRL for robot navigation.
\end{itemize}
These rules reduce the redundant space and accelerate the early exploration to provide coarse-grained learning, which is followed by fine-grained learning using the RL methods with improved efficiency.  
The efficiency is verified by experiments on real-world mobile robot systems in hex-grid environments. 

The rest of this paper is organized as follows. 
Section \ref{Sec2} introduces basic concepts of RL and related work about the efficient exploration methods for RL.
Section \ref{Sec3} presents the integrated RuRL algorithm, including the rule of wall-following, the reduction rule, and the Pledge rule.
The experimental results are discussed in Section \ref{Sec4}, and concluding remarks are drawn in Section \ref{Sec5}.

\section{Preliminaries and related work}\label{Sec2}
\subsection{Reinforcement Learning}\label{Sec2.1}

RL is originated from the idea of MDPs in the field of optimal sequential decision-making problems.
A finite MDP is a tuple of $\langle S,A,T,R,\gamma\rangle$, where $S$ is the set of states, $A$ is the set of actions, $T:S\times A\times S \to [0,1]$ is the state transition probability upon taking action $a$ in state $s$, ${R:S\times A}\to\mathbb{R}$ is the reward function, and $\gamma \in[0,1)$ is the discount factor.
A policy, ${\pi:S\times A}\to [0,1]$, defines how a learner interacts with the environment by mapping
perceived environmental states to actions, and $\sum_{a\in {A}}\pi(a|s)=1,\forall s\in {S}$.
The success of an agent depends on how to maximize the total rewards in the long run when acting under some policy $\pi$.
The goal of RL is to find an optimal policy $\pi^*=\arg\max_{\pi} J(\pi)$ that maximizes the expected long-term return from the distribution
\begin{equation}
J(\pi) = \mathbb{E}_{\tau\sim \pi(\tau)}[r(\tau)] =  \mathbb{E}_{\tau\sim \pi(\tau)}\left[\sum\nolimits_{t=0}^{\infty}\gamma^tr_t\right],
\label{Gpi}
\end{equation}
where $\tau=(s_0, a_0, s_1, a_1, ...)$ is the learning episode, $\pi(\tau)=p(s_0)\Pi_{t=0}^{\infty}\pi(a_t|s_t)p(s_{t+1}|s_t,a_t)$, $r_t$ is the immediate reward received on the transition from $s_t$ to $s_{t+1}$ under action $a_t$.

The expected total reward when executing action $a$ in state $s$ is related to the optimal action-value function $Q^*(s,a)$ as
\begin{equation}
Q^*(s,a) = \max_{\pi}\mathbb{E}_{\pi}\left[\sum\nolimits_{t=0}^{\infty}\gamma^{t}r_{t}\vert s_t=s,a_t=a\right],
\end{equation}
and satisfies the Bellman optimality equation~\citep{bellman2013dynamic}:
\begin{equation}
Q^*(s,a)=\mathbb{E}_{s'}\left[r+\gamma\max_{a'}Q^*(s',a')|s,a\right].
\end{equation}
For a discrete state-action space, the popular Q-learning ~\citep{sutton2018reinforcement} updates the action-value function with a learning rate $\alpha$ as
\begin{equation}
Q(s, a)  \leftarrow Q(s, a) + \alpha\left[r+ \gamma \max _{a'} Q(s', a')- Q(s, a)\right].
\label{q-update}
\end{equation}
In learning, the transition tuples ($s,a,r,s'$) are generated by a behavior policy that can be any exploration policy in principle.
Using these transitions, the Q-function is iteratively updated until converging to the optimal value function $Q^*(s,a)$, and the optimal policy is naturally derived as $\pi^{*}(a|s)=\arg\max_{a}Q^*(s,a)$.
To ensure that $\pi$ converges to the optimal policy, we can select the behavior policy to be $\varepsilon$-soft (e.g., the $\varepsilon$-greedy policy) so that each state-action pair will be visited for an infinite number of times theoretically.
More details about Q-learning can be found in~\citep{sutton2018reinforcement}.

\subsection{Related Work}\label{Sec2.2}
Learning control methods have been widely investigated for solving complex robot control problems.
Learning control signifies that the control system develops representations of the system's mathematical model and derives optimal control laws.
Many learning control techniques have been applied to control systems, mainly consisting of iterative feedback tuning, control loop learning, and machine learning methods. 
For example, to cope with a class of second-order servo systems, the iterative feedback tunning approach employed numerical iterative optimization techniques based on Hessians of output errors and control signal data from the closed-loop system~\citep{preitl2007iterative}. 
A control loop learning method applied two PID control loops to a parallel manipulator aiming to identify models of the robot with a manual approach~\citep{carbone2013design}. 
To reduce traffic fatalities, supervised learning was employed to classify different movement events of pedestrians~\citep{ahmed2019machine}.

While RL has confirmed its ability to learn control strategies for various tasks, e.g., robot navigation, its performance in terms of sample efficiency is still a major challenge in complex applications. 
Many exploration techniques have been investigated to improve the learning efficiency of RL, which can be categorized into undirected and directed exploration strategies according to whether they utilize exploration-specific knowledge of the learning process itself.
Undirected exploration strategies explore the environment based on randomness, such as $\varepsilon$-greedy~\citep{sutton2018reinforcement}, Boltzmann-distributed~\citep{sutton2018reinforcement} and Gaussian noise methods~\citep{osband2014generalization}.
Gaussian noise methods apply Gaussian noise to the action space or parameter space to generate noisy actions for provably efficient exploration~\citep{osband2014generalization}.
Without utilizing any internal information of the learning process, these exploration strategies bring exponential regret in discrete MDPs and are limited to linear function approximations~\citep{houthooft2016vime}.

Directed exploration strategies utilize the previous history of the learning process and influence the portion of the  environment explored in the future, including count-based~\citep{bellemare2016unifying}, curiosity-driven~\citep{houthooft2016vime,li2020random}, and upper confidence bounds (UCB) exploration~\citep{chen2017ucb}.
For tabular-based RL, count-based exploration strategies give an extra exploration bonus to frequently visited states~\citep{bellemare2016unifying}.
In large or continuous state spaces, the pseudo-counts methods employ the density model to obtain pseudo counts from the raw pixels and convert them into an exploration bonus~\citep{bellemare2016unifying}.
Neural density model methods utilize the PixelCNN to provide an exploration bonus derived from an online density model~\citep{ostrovski2017count}.
In large state-action space where the states are rarely visited multiple times, count-based methods are easy to obtain sub-optimal policies owing to paying more attention to visited states only~\citep{tang2017exploration}.
By comparison, we use rules to efficiently reduce the redundant exploration space in complex environments, and theoretically prove that the optimal policy is still in reduced space.

In contrast to count-based methods, curiosity-driven exploration uses a mechanism for generating intrinsic reward signals towards seeking out state-action regions that the agent rarely explores~\citep{houthooft2016vime}.
Intrinsically motivated goal exploration processes explore more states which are fewer experienced in disentangled goal space, and lead to more efficient exploration than the entangled one~\citep{laversanne2018curiosity}.
The curious object-based search agent method~\citep{watters2019cobra} learns representations of the environment without extrinsic reward during the task-free exploration phase, and can be subsequently applied well in other tasks. 
Since the exploration bonus is not dependent on the reward, the main disadvantage is that the exploration may concentrate on irrelevant aspects of the environment~\citep{chen2017ucb}.
In contrast, our method explores efficiently in smaller space, paying less attention to the irrelevant aspects of the environment.

Compared to these methods, the UCB methods design a mechanism for computing the upper confidence bounds of Q-values, and add decaying exploration bonuses to frequently visited states for optimistic exploration~\citep{jin2018q}.
The discounted UCB1-tuned method considers the variance of reward, and uses the weighted variance of the Q-values to reduce exploration regrets~\citep{saito2014discounted}.
UCB exploration via the Q-ensemble method computes the empirical mean and standard deviation of an ensemble of Q-value estimates to reduce the uncertainty for exploration~\citep{chen2017ucb}.
UCB Bernstein approach achieves lower regrets by deriving a coarse bound on the empirical variance of value functions~\citep{jin2018q}.
While these methods are more efficient with UCB exploration strategies, the agent may not efficiently learn in large  state space since the regret scales linearly in the dimension of state space~\citep{jin2019provably}.
Here, the improved learning efficiency obtained by our method is more pronounced in a multi-room environment, which is supposed to benefit from the rule for efficiently reducing the redundant exploration space.

\section{Rule-based RL (RuRL) for Navigation}\label{Sec3}
In this section, we present the framework of RuRL with specific implementations of the rules for generating the closed-loop trajectory, reducing the exploration space, and guiding the early exploration strategy. Then, we give the integrated RuRL algorithm using these implementations.

\subsection{Framework}\label{Sec3.1}
We focus on the mobile robot navigation problem using RL in the hexagonal map environments.
The simultaneous localization and mapping (SLAM) system running on the robot operating system (ROS) platform~\cite{hess2016real} is employed to construct the map of the unknown environment.
The environment perception tasks can be tackled by utilizing the scan matching technique from a fusion between the lidar and ultrasonic sensors.
An inertial measurement unit (IMU) sensor is used to estimate the rotational angle for improving the accuracy of the scan matching method.
The map is created by Cartographer algorithms~\cite{hess2016real}, and rasterized into hexagonal grids using the double-width coordinate system~\citep{birch2007rectangular}.
\footnote{Based on two orthogonal axes, the double-width coordinate system steps to the right by 1 unit, and steps to the below by 2 units~\citep{hoogeboom2018hexaconv}.}

One fundamental problem faced by RL for robot navigation is that the state space can be vast, and consequently, there may be a long delay before the reward is received. 
By applying a machine learning method to automatically discover useful rules, RBML allocates the learning mode in the cooperation rules, making the algorithm effective and flexible.
The individually interpretable rules are clearly defined and applied to challenging tasks that are time-consuming or difficult for data-driven methods.
These rules can model domain-specific knowledge and help speed up the RL process.
Hence, we design several rules to facilitate the navigation performance using RL under a hex-grid map environment.
First, a closed-loop trajectory is generated using two specific trajectories, i.e., $\mathcal{T}_{l}$ and $\mathcal{T}_{r}$, obtained by the left- and right-hand rules, respectively.
Second, based on the closed-loop trajectory, the space reduction rule is employed to form a reduced closed-loop state-action space for reducing redundant exploration space using a proper optimization step $K$.
Finally, in the reduced space, when the number of learning steps exceeds a threshold value $E$ without reaching the goal, the Pledge rule is utilized to guide the exploration strategy for finding the goal with fewer steps at the early learning stage.
In the paper, we adopt the widely used Q-learning as the basic implementation algorithm.
The framework of RuRL is illustrated by the flow diagram as shown in Fig.~\ref{fig:framework}, and the rules and the integrated algorithm are presented in detail in the following subsections.

\begin{figure}[tb]
	\centering
	\includegraphics[width=0.9\columnwidth]{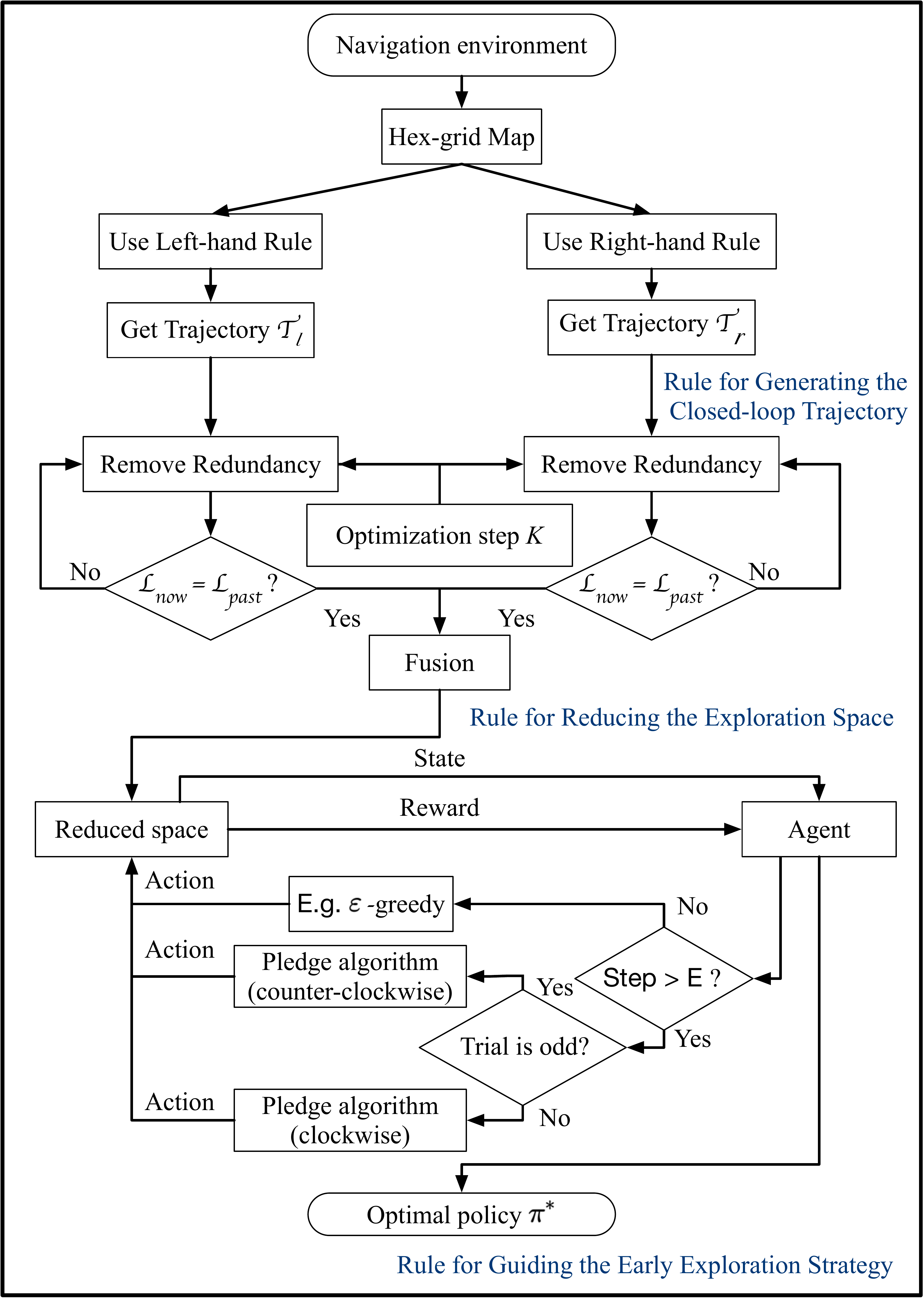}
	\caption{The flow diagram of rule-based RL for robot navigation.}
	\label{fig:framework}
\end{figure}

\subsection{Rule for Generating the Closed-Loop Trajectory}\label{Sec3.2}

After the SLAM system obtains the environment map, we rasterize it into hexagonal grids as shown in Fig.~\ref{fig:hexgrids}.
In hexagonal grids, each state has more action options than in the square grids, making the planned path smoother.
We adopt the widely used double-width coordinate system to calibrate the hexagonal environment.
The origin of the coordinate system is at the top left corner, and the adjacent hexagonal grids along the horizontal and the vertical coordinate axis differ by one and two units, respectively.
Let $l$ and $w$ denote the length and width of the map.
Then, the numbers of columns and rows of the hex-grid map, $m$ and $n$, are calculated as
\begin{equation}
	\left\{\begin{array}{l}{(n-1) * \frac{\sqrt{3}}{2} * a=w}, \\ {(m+1) * \frac{3}{2} * a-a=l},\end{array}\right.
	\label{equ:col}
\end{equation}
where $a$ is the hexagonal edge length.
Given a state that corresponds to a hexagonal grid, there are six available actions.
Correspondingly, from the first perspective of the mobile robot, the six available actions are: front ($F$), right front ($RF$), right rear ($RR$), rear ($R$), left rear ($LR$), and left front ($LF$).

\begin{figure}[tb]
	\centering
	\includegraphics[width=0.7\columnwidth]{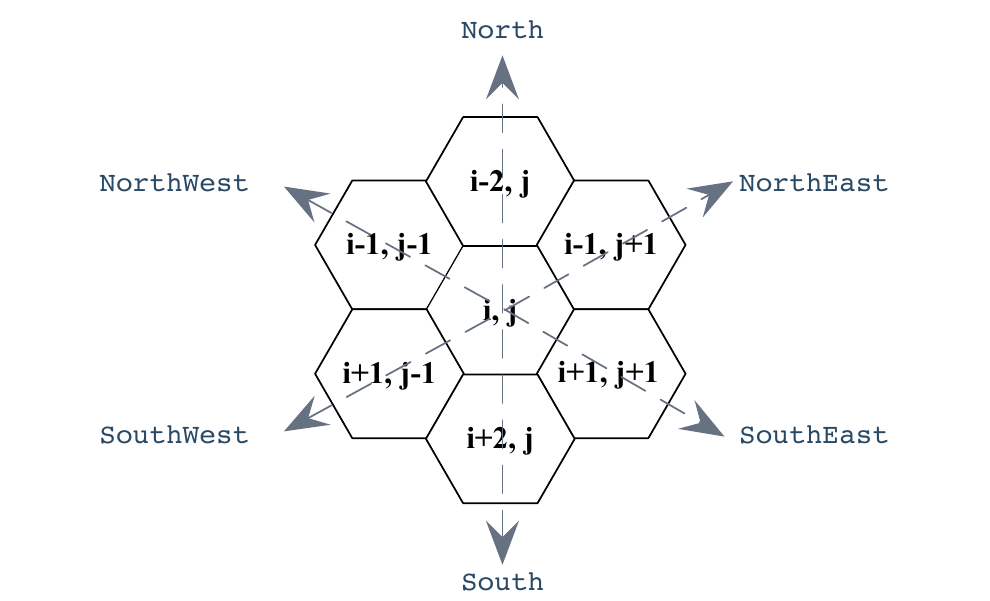}
	\caption{The coordinate system of the hexagonal map.}
	\label{fig:hexgrids}
\end{figure}

Since the direction of the main angle of view changes when the robot moves, we need to record the previous action $a_{t-1}$ to determine the perspective of the robot at the current time step $t$.
For example, when $a_{t-1}=RF$, the main angle of view is the direction of the right front from the perspective of the mobile robot.
Now, we aim to use the left- and the right-hand rules to generate a closed-loop trajectory along the wall. 
The right-hand rule is explained as follows, and the left-hand rule can be understood in a similar way.
To design the right-hand rule of always walking right, we define the priority of action selection as $RF>F>LF>LR>R>RR$.
That is, in any state $s_t$, the mobile robot will first try to choose the $RF$ action if it can pass through the right front direction. 
If not, the mobile robot will try to select the action in the order of $F, LF, LR, R, RR$ until it can find a direction to take a valid step.
The action selection strategy is executed following the right-hand rule until the mobile robot navigates to the goal point. 
We record the sequence of the states and actions as the right-hand trajectory $\mathcal{T}_r$ and the left-hand trajectory $\mathcal{T}_l$ as
\begin{equation}
	\begin{aligned}
		&\mathcal{T}_r=\{s_{r_1}, a_{r_1}, s_{r_2}, a_{r_2}, ..., s_{r_p}\}, \\
		&\mathcal{T}_l=\{s_{l_1}, a_{l_1}, s_{l_2}, a_{l_2}, ..., s_{l_q}\},
	\end{aligned}
	\label{tra}
\end{equation}
where $p$ and $q$ are the numbers of the traversed states in the right and the left trajectories, respectively.
$s_{r_p}$ and $s_{l_q}$ are the same goal state.
To better illustrate the left- and the right-hand rules, Fig.~\ref{fig:hand} presents a simple example of navigating in a hexagonal grid map, and Algorithm~\ref{algo:hand} summarizes the rule for generating the closed-loop trajectory.

\begin{figure}[tb]
	\centering
	\subfigure[Left-hand rule]{\includegraphics[width=0.43\columnwidth]{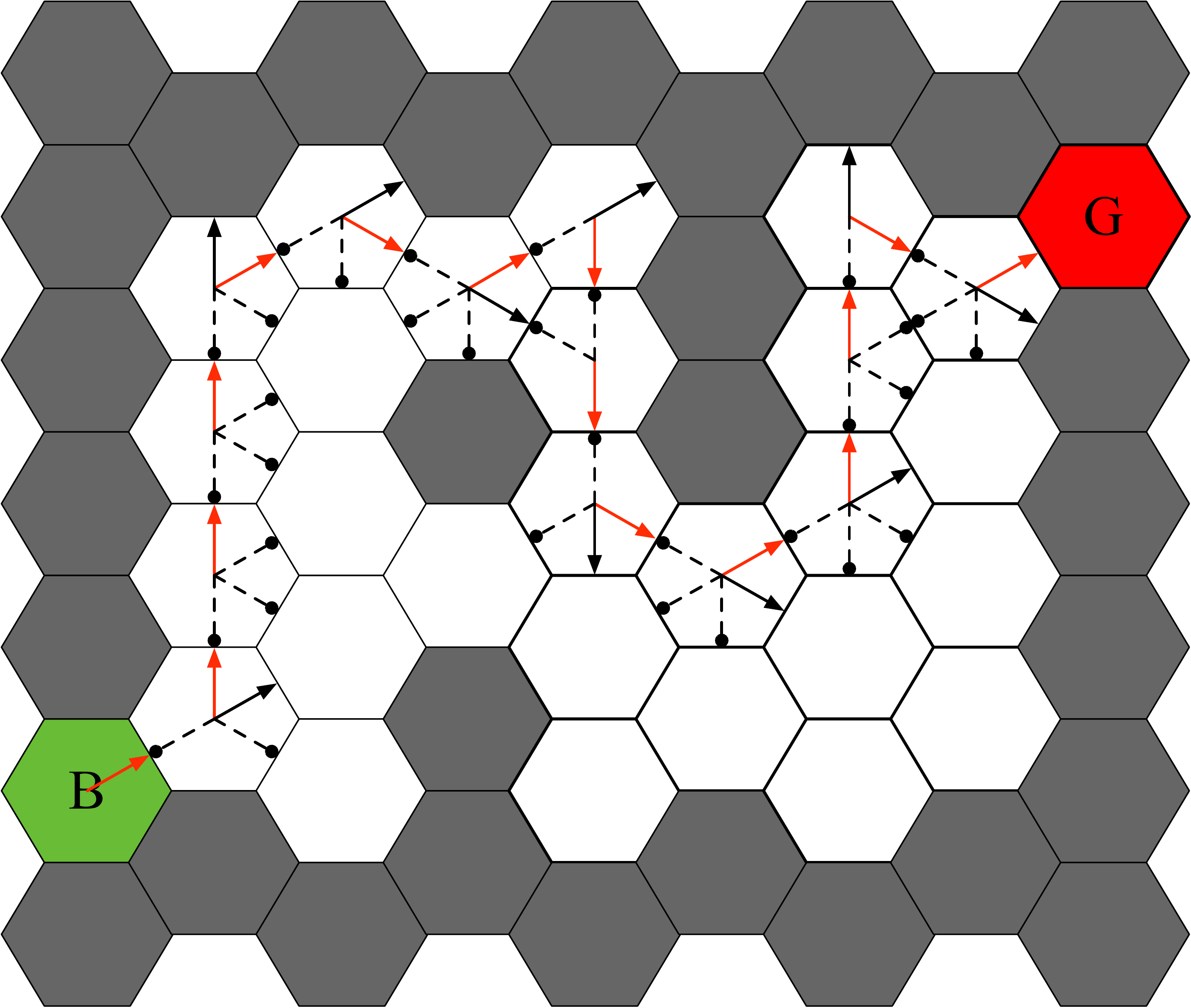}}
	\subfigure[Right-hand rule]{\includegraphics[width=0.43\columnwidth]{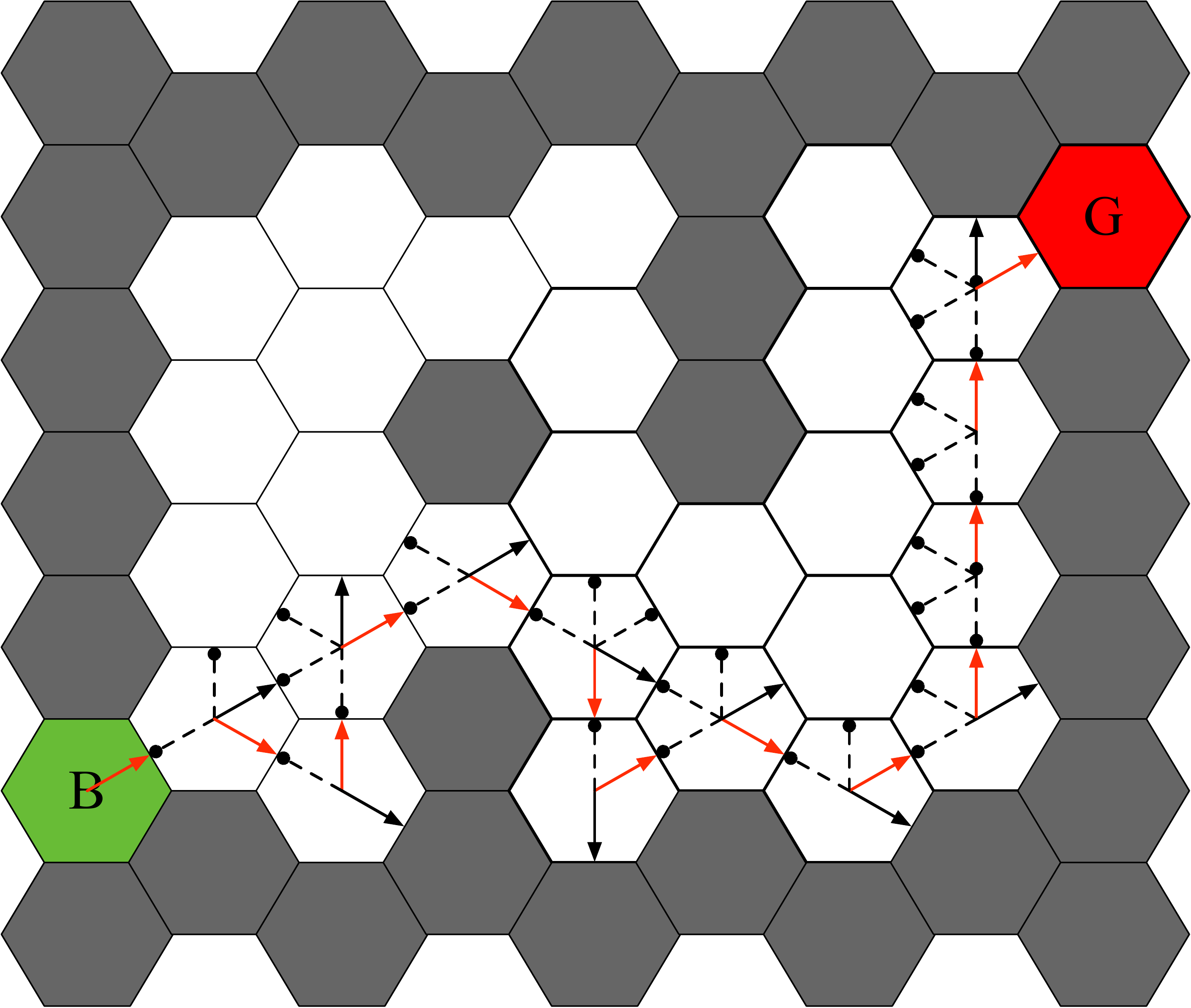}}
	\caption{A simple example of using the left- and right-hand rules to navigate in a hexagonal grid map. 
		$B$ is the starting point and $G$ is the goal point.
		The selected actions are indicated by the red arrows. 
		The other available actions are indicated by the dotted solid circular arrows.
		The main angle of view is indicated by the black arrows.}
	\label{fig:hand}
\end{figure}

\begin{algorithm}[tb]
	\caption{Right-hand (Left-hand) Rule}
	\label{algo:hand}
	\KwIn{Starting point $B$; Goal point $G$}
	\KwOut{Right-hand (left-hand) trajectory $\mathcal{T}_r$ ($\mathcal{T}_l$)}
	Initialize $\mathcal{T}_r=\emptyset$ ($\mathcal{T}_l=\emptyset$) \\
	Initialize $t=0, s_t$ \\
	\While{$s_t$ is not terminal}{
		Select $a_t$ according the right-hand (left-hand) rule \\
		Execute $a_t$, observe $s_{t+1}$ \\
		Add $(s_t, a_t)$ to $\mathcal{T}_r$ ($\mathcal{T}_l$ ) using Eq.~(\ref{tra})\\
		$t\leftarrow t+1$
	}
\end{algorithm}

\subsection{Rule for Reducing the Exploration Space}\label{Sec3.3}
After obtaining the left- and the right-hand trajectories, the available states that the mobile robot can access are on or inside the closed loop.
Obviously, the trajectory itself, $\mathcal{T}_l$ or $\mathcal{T}_r$, is a feasible while not necessarily optimal path that navigates from the starting point to the goal. 
If we can properly reduce the length of the left- and right-hand trajectories, we will obtain a smaller closed-loop trajectory that can effectively reduce the redundant exploration space. 
Hence, we employ the reduction rule to optimize the two trajectories $\mathcal{T}_l$ and $\mathcal{T}_{r}$, respectively.
The reduction process on the right-hand trajectory is explained as follows, and the operation on the left-hand trajectory can be understood in a similar way.

Our main idea is that, given two states on the right-hand trajectory, we aim to find out whether there exists a shorter path between the two states than the path on the trajectory.
If it does exist, we can replace the original path on the trajectory with the shorter one, thus obtaining a new trajectory with a reduced length.
To implement this idea, we need to first formally define the step distance between two given states and the trajectory distance between two states on the trajectory.
\begin{definition}[Step Distance]
	The step distance is defined as the number of the least steps of actions needed to transit from $s$ to $s'$ (analogous to the definition in~\citep{thomason2020vision}).
	Specifically, the step distance between the same states is 0 and the step distance from one state to its adjacent states is set as 1.
\end{definition}

\begin{definition}[Trajectory Distance]
	Given two states $s$ and $s'$ on the left- or the right-hand trajectory, their trajectory distance is defined as the number of the least steps of actions needed to transit from $s$ to $s'$, while the intermediate states should also be on the trajectory.
\end{definition}

Based on the definition of step distance and the fact that there are $6K$ $K$-step hexagonal grids around the center grid, we further define the $K$-step reachable states.
\begin{definition}[$K$-Step Reachable States]
	Given the state $s=(i,j)$, a state $s'$ is called the $K$-step reachable state of $s$ if the step distance between $s'$ and $s$ is $K$. 
	There exist $6K$ states whose step distance from $s$ is $K$, and these $6K$ states are called the $K$-step reachable states of $s$. 
\end{definition}

Based on the definition of $K$-step reachable states, we are able to easily figure out the potentially shorter path between two states on the trajectory.
Given a state $s$ on the right-hand trajectory, we first obtain its $K$-step reachable states.
For example, when $K$=2, those 12 states are $(i-4, j), (i-3, j+1), (i-2, j+2), (i, j+2), (i+2, j+2), (i+3, j+1), (i+4, j+1), (i+3, j-1), (i+2, j-2), (i, j-2), (i-2, j-2), (i-3, j-1)$.
If any reachable state is on the $\mathcal{T}_{temp}$, which is the sequence after the current state on the right-hand trajectory, we compute the trajectory distance from the given state to the reachable state.
If the trajectory distance is greater than $K$, it indicates that we have discovered a shorter path between them instead of the original path on the trajectory.
Hence, we can replace the original path with the new $K$-step path $\mathcal{T}_j$, resulting in an improved right-hand trajectory.\footnote{When the optimized step size is $K$, there are $2^K-1$ $K$-step reachable paths, and $\mathcal{T}_j$ is one of them.}
We apply this reduction rule for every state in the right-hand trajectory, and obtain an optimized trajectory $\mathcal{T}_r^K$. 
In similar way, we can obtain the optimized left-hand trajectory $\mathcal{T}_l^K$.
Together, a smaller closed-loop trajectory is formed.
Algorithm~\ref{algo:reduction} summarizes the rule for reducing the exploration space.

\begin{algorithm}[tb]
	\caption{Rule for Reducing Exploration Space}
	\label{algo:reduction}
	\KwIn{Optimization step $K$; \newline left- and right-hand trajectories $\mathcal{T}_{l}, \mathcal{T}_{r}$} 
	\KwOut{Reduced trajectories $\mathcal{T}_{l}^K, \mathcal{T}_{r}^K$}
	\For {$\mathcal{T}_{opt}$ = $\mathcal{T}_{l}, \mathcal{T}_{r}$} {
	  Get $l$ from the length of $\mathcal{T}_{opt}$\\ 
	   \While{$l$ is changing}{
		Update the length of $l$   \\
		\For{$i=1,...,l-K$}{
			Get $\mathcal{T}_{temp}$ after the $i$-th state in $\mathcal{T}_{opt}$\\
			Obtain the $K$-step reachable states $\mathcal{T}_{reach}^K$\\
			\If{$\mathcal{T}_{reach}^K$ matches state $s_{j}$ in  $\mathcal{T}_{temp}$ }{
				Replace the sequence from $s_{i}$ to $s_{j}$ in $\mathcal{T}_{opt}$ with  the new $K$-step path $\mathcal{T}_{j}$ \\
				Update  $\mathcal{T}_{opt}$ 
			}
		}
	}
	$\mathcal{T}_{l}^K,\mathcal{T}_{r}^K$ ${\leftarrow} \mathcal{T}_{opt}$
	}
\end{algorithm}

Fig.~\ref{fig:reduction} presents a simple example of optimizing the closed loop trajectory.
In Fig.~\ref{fig:reduction}(a), the path trajectory has been marked as a light gray area, in which actions are selected using the left-hand rule. 
In Fig.~\ref{fig:reduction}(b), from the start state to the end state of the trajectory obtained in Fig.~\ref{fig:reduction}(a), we sequentially generate a 1-step reachable state $\mathcal{T}_{reach}^1$ for each state and match it with the subsequent trajectory. 
If the match is successful, we replace it with a new 1-step path, and otherwise we do the same for the next state.
For example, the original path is ${C}\rightarrow{C2}\rightarrow{C3}$ and ${D}\rightarrow{D1}\rightarrow{D2}$, which can be optimized as ${C}\rightarrow{C3}$ and ${D}\rightarrow{D2}$.  
After optimization, we obtain the optimized closed-loop space which is inside the light gray trajectories, as shown in Fig.~\ref{fig:reduction}(c).
In the closed-loop space formed after the optimization of $K=1$, we will optimize with $K=2$.
The process is similar to that in Figs.~\ref{fig:reduction}(a)-(c), as a detailed process shown in Figs.~\ref{fig:reduction}(d)-(f).

\begin{figure*}[tb]
	\subfigure[]{\includegraphics[width=0.22\textwidth]{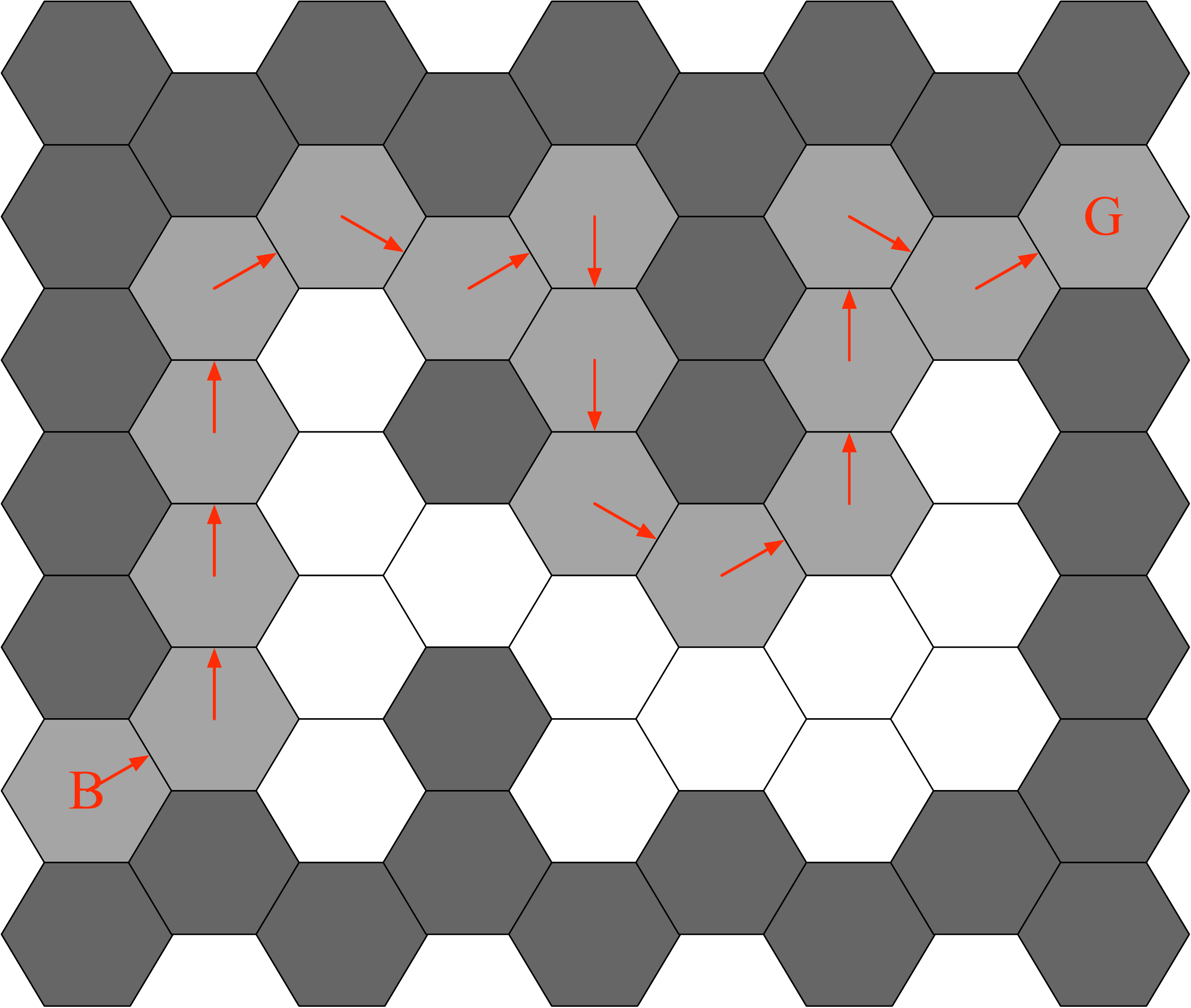}}
	\subfigure[]{\includegraphics[width=0.31\textwidth]{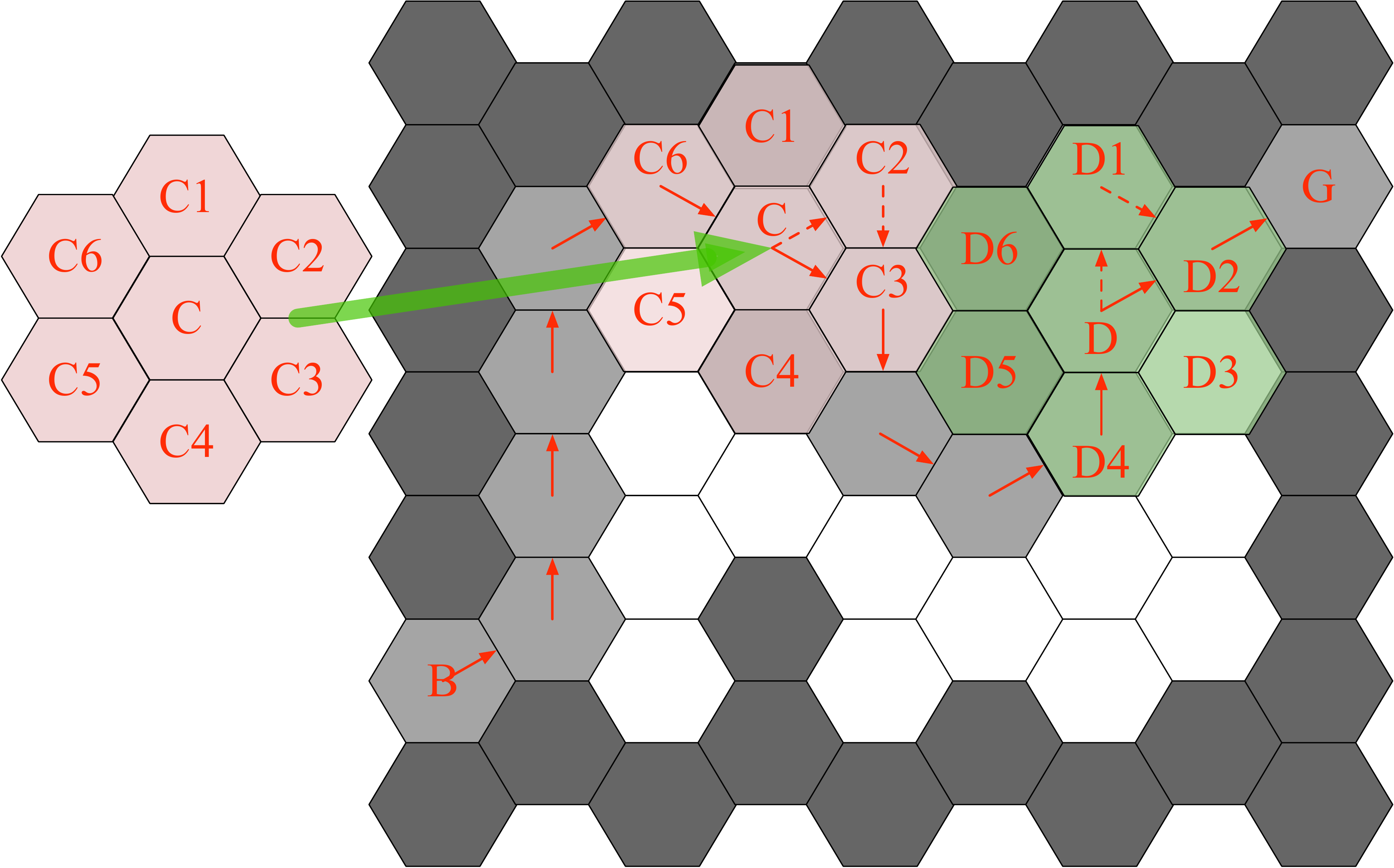}}
	\subfigure[]{\includegraphics[width=0.23\textwidth]{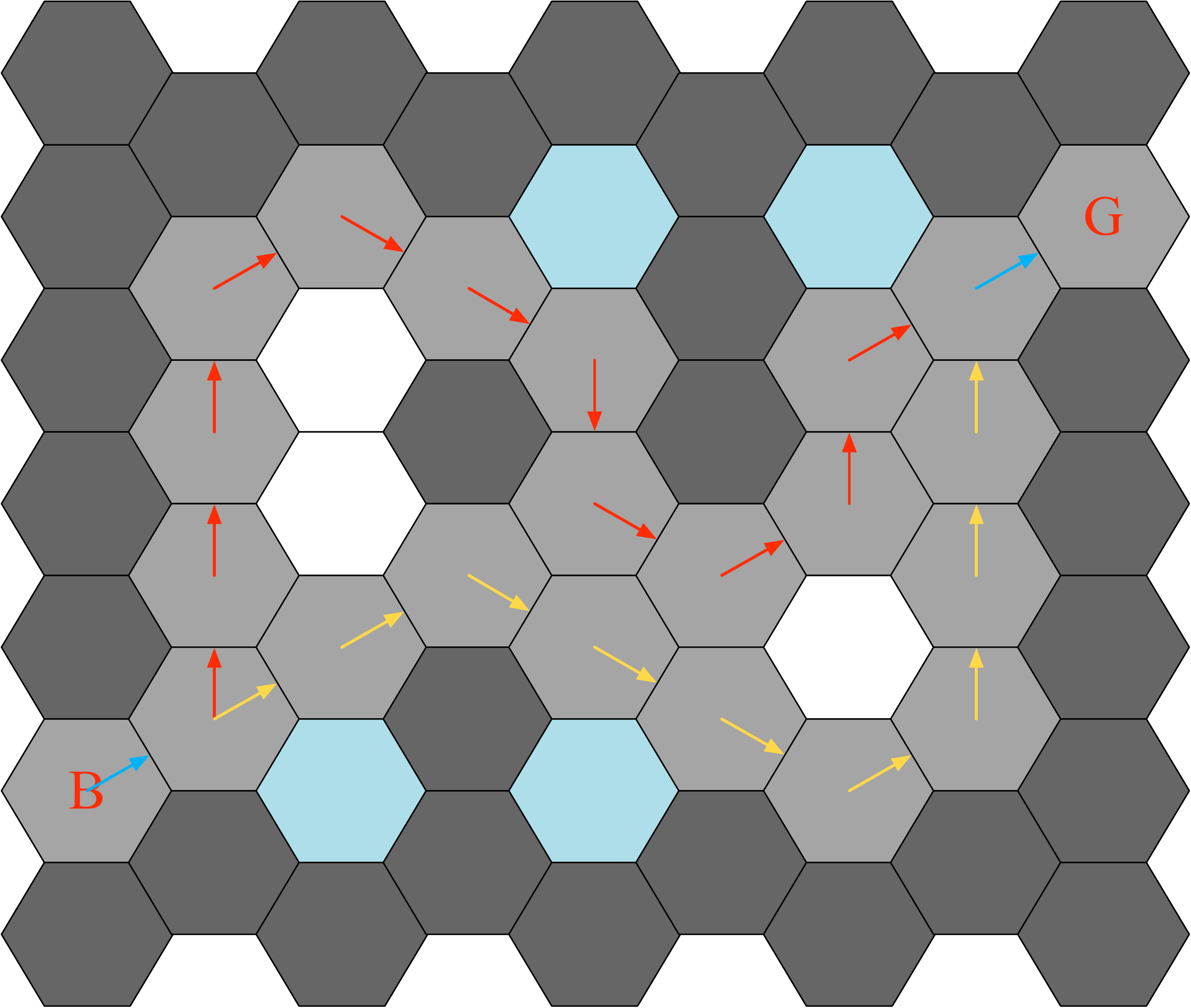}}
	\subfigure[]{\includegraphics[width=0.23\textwidth]{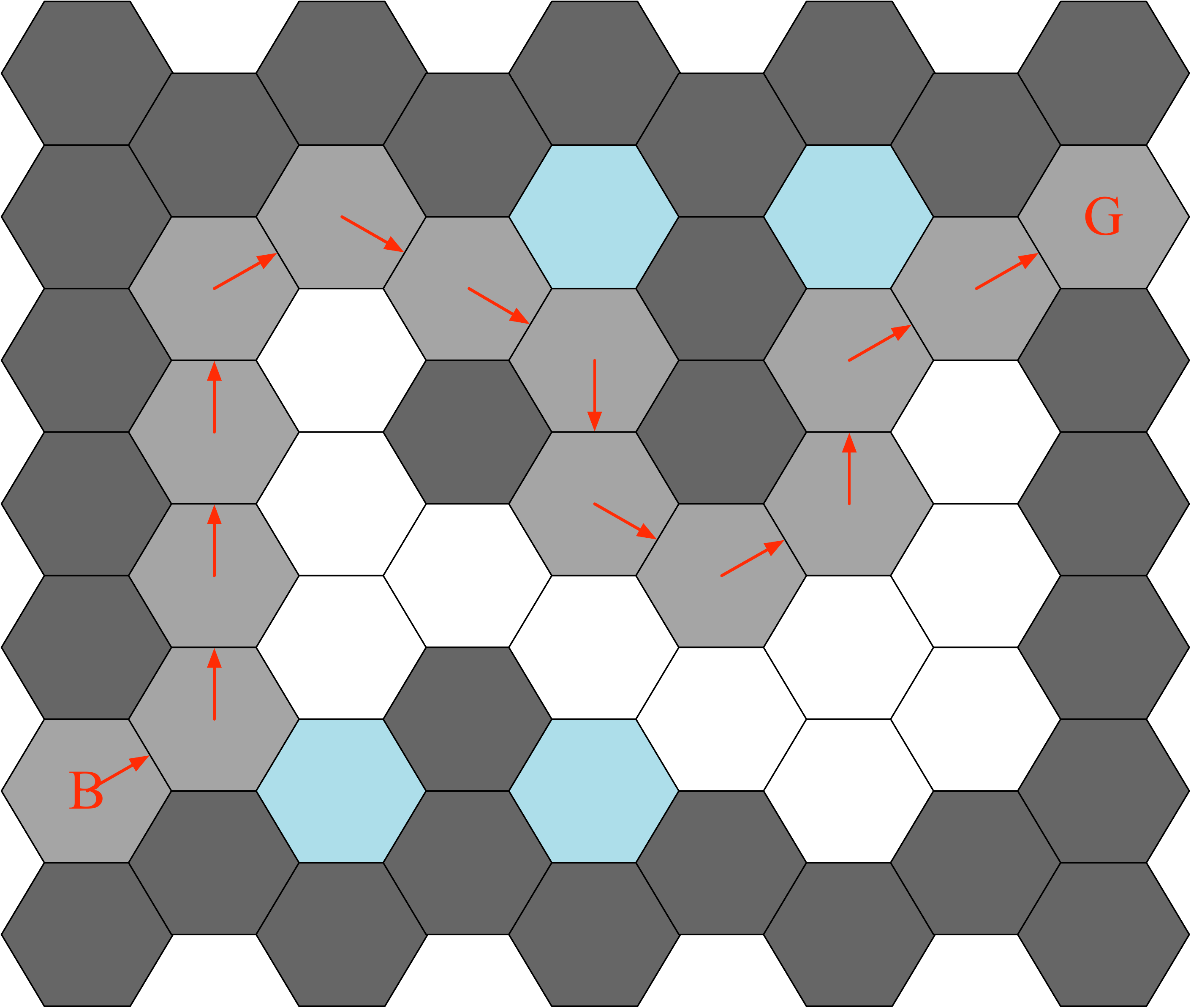}}
	\hspace{11mm}
	\subfigure[]{\includegraphics[width=0.35\textwidth]{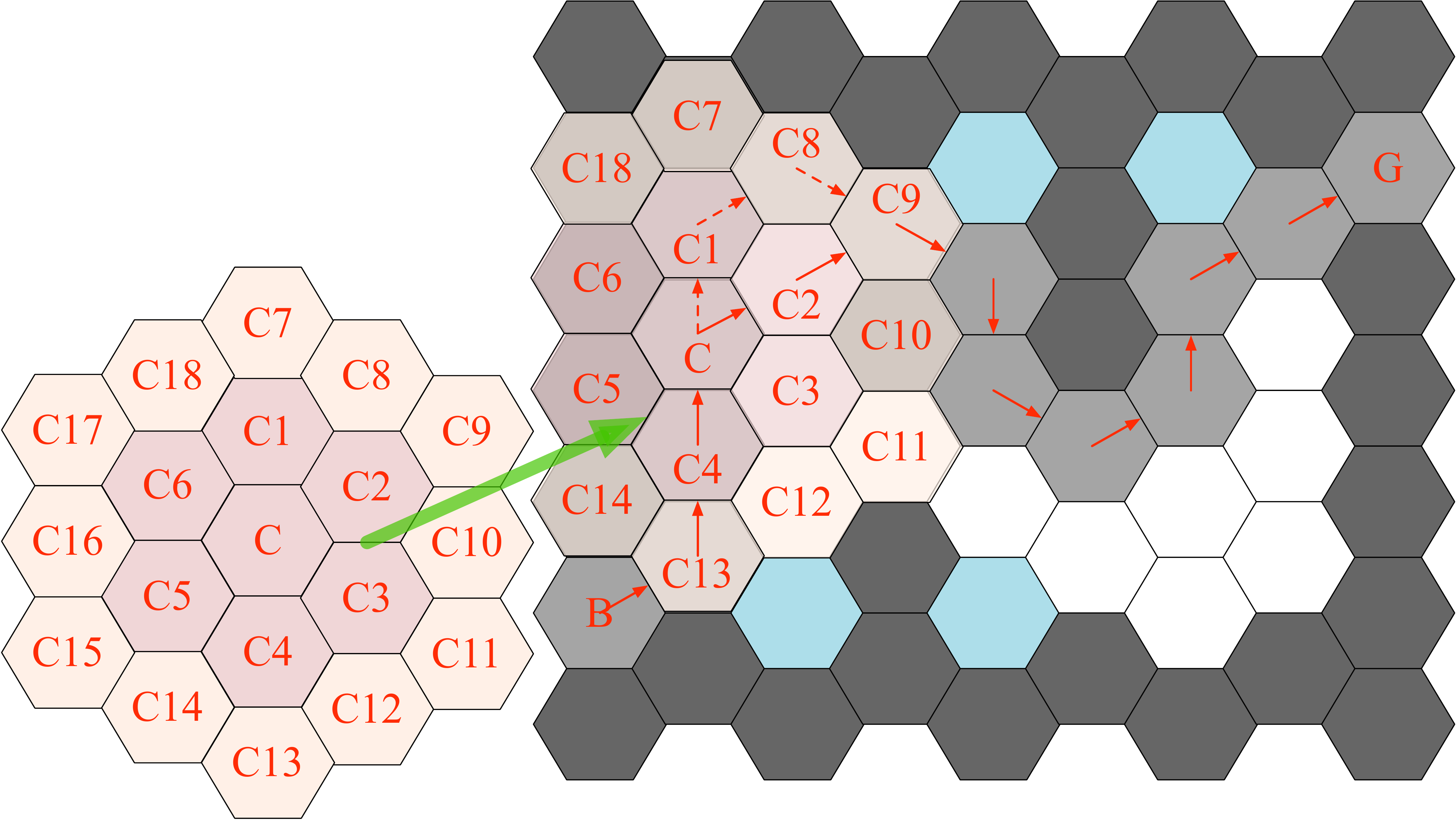}}
	\hspace{20mm}
	\subfigure[]{\includegraphics[width=0.23\textwidth]{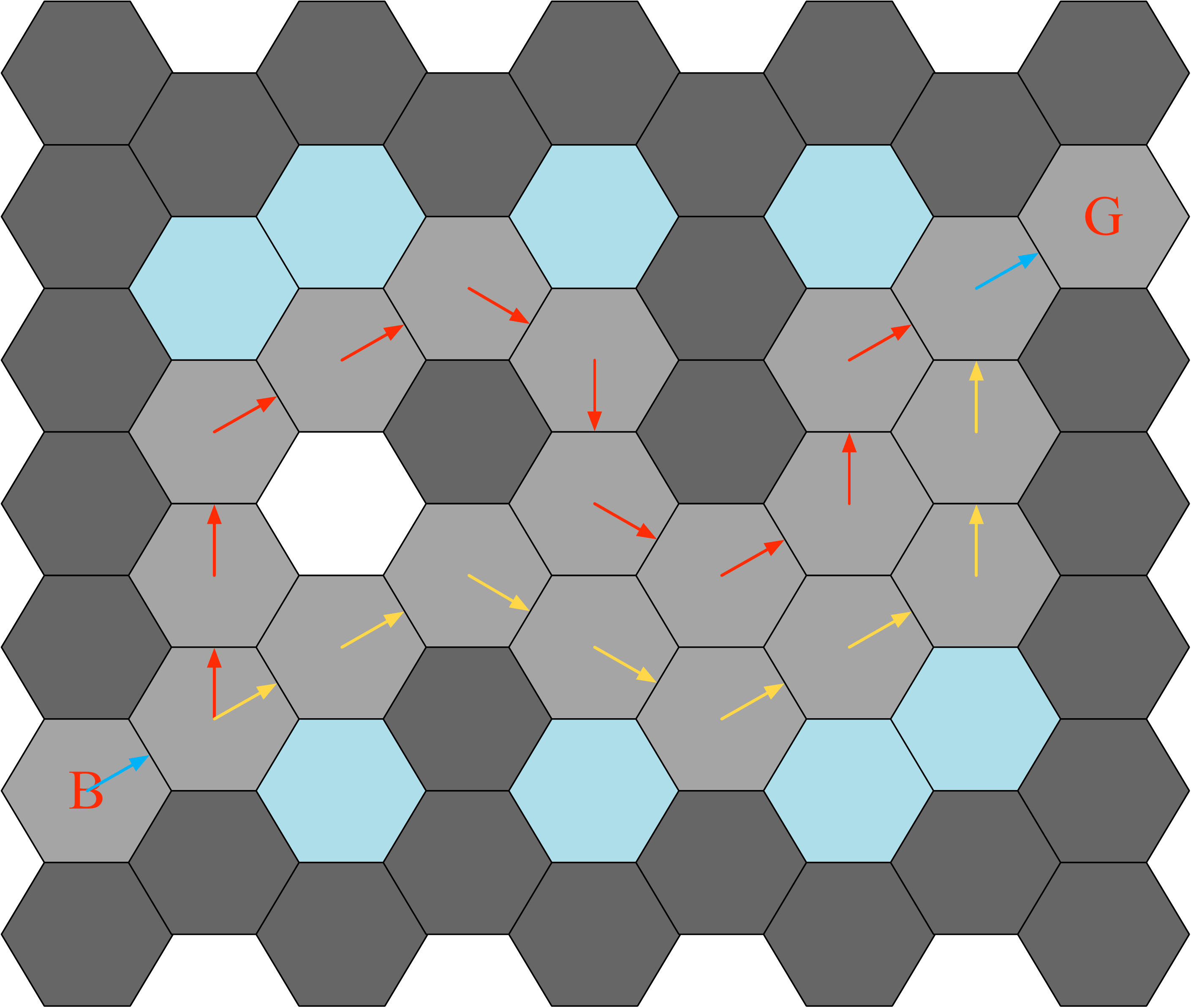}}
	\caption{A simple example of optimizing the closed-loop trajectory using different optimization steps of $K=1$ (a)-(c) and $K=2$ (d)-(f). The selected actions are indicated as red or yellow solid arrows. The actions before optimizing the trajectory are indicated as dotted arrows. The closed-up view on the left side of (b) and (e) shows the $K$-Step reachable states. The reduced exploration space is indicated as light blue hexagonal grids.}
	\label{fig:reduction}
\end{figure*}

Intuitively, the optimized trajectories can form an improved closed loop for more efficient exploration. 
Further, Theorem~1 presents the theoretical analysis that the optimized trajectories correctly reduce the redundant exploration space.

\begin{theorem}
	Let $s$, $s'$ be two states on the right-hand trajectory, with their step distance being $K$ and their trajectory distance being $J$.
	Let $v(s')$ be the value function of state $s'$.
	Let $\pi_1$ and $\pi_2$ denote the policies of navigating from state $s$ to state $s'$ following the original path on the right-hand trajectory $\mathcal{T}_r$ and the optimized path on the optimized right-hand trajectory $\mathcal{T}_r^K$, respectively.
	$v_{\pi_1}(s)$ and $v_{\pi_2}(s)$ are the value functions of state $s$ when executing policy $\pi_1$ and $\pi_2$, respectively.
	Then, for any $J\ge K$, we have $v_{\pi_1}(s)\le v_{\pi_2}(s)$. 
\end{theorem}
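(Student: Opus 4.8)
The plan is to exploit the fact that $\pi_1$ and $\pi_2$ differ only on the segment joining $s$ to $s'$ and coincide thereafter, so each value function factors cleanly into a segment contribution plus a common discounted tail $v(s')$. First I would unroll the discounted return along each path. Writing the rewards collected by $\pi_1$ along the $J$-step original segment as $r_0,\dots,r_{J-1}$ and those collected by $\pi_2$ along the $K$-step optimized segment as $r'_0,\dots,r'_{K-1}$, the Bellman decomposition yields
\begin{equation}
v_{\pi_1}(s)=\sum\nolimits_{t=0}^{J-1}\gamma^t r_t+\gamma^J v(s'),\qquad
v_{\pi_2}(s)=\sum\nolimits_{t=0}^{K-1}\gamma^t r'_t+\gamma^K v(s'),
\end{equation}
since both policies behave identically once $s'$ is reached. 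Subtracting, the gap $v_{\pi_2}(s)-v_{\pi_1}(s)$ splits into a \emph{discounting term} $(\gamma^K-\gamma^J)v(s')$ and a \emph{path-reward term} $\sum_t\gamma^t r'_t-\sum_t\gamma^t r_t$.

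Second I would invoke the hypothesis $J\ge K$ together with $\gamma\in[0,1)$ to conclude $\gamma^K\ge\gamma^J$, so the discounting term is nonnegative whenever $v(s')\ge 0$. In the navigation reward model used here, where a positive reward is earned only upon reaching the goal and the per-step reward is nonpositive, the tail value $v(s')$ is nonnegative (it is the discounted goal reward seen from $s'$) and the longer path accrues no positive surplus along the way. In particular, under a sparse goal reward the path-reward sums vanish and the inequality collapses to $(\gamma^K-\gamma^J)v(s')\ge 0$, giving $v_{\pi_1}(s)\le v_{\pi_2}(s)$ at once.

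To cover the general case with a per-step cost $c\le 0$ I would factor the two geometric sums and reduce the claim to the single condition $v(s')\ge c/(1-\gamma)$; this holds automatically because $c/(1-\gamma)$ is the smallest achievable return (paying $c$ at every step forever) while $v(s')$ is the value of a genuine continuation to the goal. The main obstacle is precisely this general-reward bookkeeping: one must verify that the saved discounting on the dominant tail $v(s')$ always dominates any difference in accumulated path rewards between the $K$-step and $J$-step segments, rather than silently assuming the trivial sparse case. Once this lower bound on $v(s')$ is in hand, combining it with $\gamma^K\ge\gamma^J$ closes the argument and shows that the reduction rule never decreases the value, so the optimal path is retained in the reduced space.
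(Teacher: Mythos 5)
Your proof is correct, and it shares the same skeleton as the paper's: unroll the discounted return along the $J$-step and $K$-step segments and compare geometric sums using $\gamma^K \ge \gamma^J$. Where you genuinely differ is in the treatment of the tail value $v(s')$. The paper sidesteps it by declaring $s'$ a terminal state, so that $v_{\pi_1}(s')=v_{\pi_2}(s')=0$ and the gap collapses to $\frac{\gamma^K-\gamma^J}{1-\gamma}\,r \le 0$ because $r\le 0$; this shortcut silently discards the differential discounting of a common, possibly nonzero continuation ($\gamma^K v(s')$ versus $\gamma^J v(s')$). You instead retain the tail, factor the gap as $(\gamma^K-\gamma^J)\bigl[v(s')-c/(1-\gamma)\bigr]$, and close the argument by noting $v(s')\ge c/(1-\gamma)$, since the genuine continuation to the goal never earns less than the per-step cost $c$ at every step forever. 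This buys a strictly stronger conclusion: the inequality holds even when $s'$ is not artificially terminated, and your factorization makes explicit the only scenario in which shortening could fail --- a tail value below $c/(1-\gamma)$, i.e., worse than paying the step cost in perpetuity, which is impossible in this reward model but is exactly what the paper's terminal-state assumption quietly rules out rather than proves. The paper's route buys brevity. One minor remark: your tail exponents $\gamma^J$ and $\gamma^K$ are the standard ones; the paper writes $\gamma^{J-1}$ and $\gamma^{K-1}$, an off-by-one that is immaterial there only because the tail is zeroed.
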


\begin{proof}
	When executing policy $\pi_1$, assume that $s_{\pi_1}^1, ..., s_{\pi_1}^{J-1}$ are the sequential states between states $s$ and $s'$. According to the Bellman equation~\citep{bellman2013dynamic,sutton2018reinforcement}, the value function of state $s$ is
	\begin{equation}\nonumber
		\begin{aligned}
			v_{\pi_1}(s) & = \mathbb{E}_{\pi_1}\left[\sum_{i=0}^{\infty}\gamma^ir_i|s_0=s\right] \\
			& = \sum_{a}\pi_1(a|s)\sum_{s''}p(s''|s,a)(r + \gamma v_{\pi_1}(s'')) \\
			& = r + \gamma v_{\pi_1}(s_{\pi_1}^1) = r + \gamma(r + \gamma v_{\pi_1}(s_{\pi_1}^2)) = ... \\
			& = (r + \gamma r + ... + \gamma^{J-1} r) + \gamma^{J-1}v_{\pi_1}(s') \\
			& = \frac{1-\gamma^J}{1-\gamma}r + \gamma^{J-1}v_{\pi_1}(s').
		\end{aligned}
	\end{equation}
	In a similar way, the value function of state $s$ when executing policy $\pi_2$ is
	\begin{equation}\nonumber
		v_{\pi_2}(s) = \frac{1-\gamma^K}{1-\gamma}r + \gamma^{K-1}v_{\pi_2}(s').
	\end{equation}
	We can set state $s'$ as the terminal state. 
	Then, we have $v_{\pi_1}(s')=v_{\pi_2}(s')=0$, and
	\begin{equation}\nonumber
		v_{\pi_1}(s) - v_{\pi_2}(s) = \frac{\gamma^K - \gamma^J}{1-\gamma}r.
	\end{equation}
	Since $J\ge K$, $0\le\gamma\le 1$, and $r\le 0$ in the navigation domains, we have $v_{\pi_1}(s) \le v_{\pi_2}(s)$.
\end{proof}

Theorem~1 proves the effectiveness of trajectory optimization. 
Suppose that $\mathcal{T}^*$ is the optimal path in the original closed loop formed by trajectories $\mathcal{T}_l$ and $\mathcal{T}_r$.
Next, we present and prove Theorem~2 which demonstrates that the optimal path $\mathcal{T}^*$ is still within the closed loop formed by $\mathcal{T}_l^K$ and $\mathcal{T}_r^K$.

\begin{theorem}
    Let $\mathcal{C}$ denote the closed-loop region formed by $\mathcal{T}_l^K$ and $\mathcal{T}_r^K$.
    For any optimal path $\mathcal{T}^{*}$, we have $\mathcal{T}^{*}\subseteq \mathcal{C}$. 
\end{theorem}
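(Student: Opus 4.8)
The plan is to argue by contradiction, combining the geometric description of the reduction with the value comparison already established in Theorem~1. First I would record the structural facts produced by Algorithm~\ref{algo:reduction}: each reduction step replaces a trajectory segment of trajectory distance $J$ between two states $s_i,s_j$ by a $K$-step shortcut $\mathcal{T}_j$ with $J>K=d(s_i,s_j)$, so $\mathcal{T}_j$ is a geodesic (a shortest path) between its endpoints, and every sub-segment of $\mathcal{T}_j$ is again a geodesic. Consequently $\mathcal{C}\subseteq\mathcal{C}_{\mathrm{orig}}$, where $\mathcal{C}_{\mathrm{orig}}$ is the loop bounded by $\mathcal{T}_l$ and $\mathcal{T}_r$; the set difference $\mathcal{C}_{\mathrm{orig}}\setminus\mathcal{C}$ is exactly the union of the excised corners; and each excised corner is bounded on its interior side by such a geodesic shortcut lying on the new boundary $\mathcal{T}_r^K$ (or $\mathcal{T}_l^K$). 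I would also note that the reduction never displaces the endpoints, so the start $B$ and the goal $G$ remain in $\mathcal{C}$.

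Next, assume toward a contradiction that $\mathcal{T}^{*}\not\subseteq\mathcal{C}$. Then $\mathcal{T}^{*}$ visits some state of $\mathcal{C}_{\mathrm{orig}}\setminus\mathcal{C}$; since $B,G\in\mathcal{C}$, the path must enter and later leave the corresponding excised corner, crossing its bounding geodesic shortcut at an entry state $x$ and an exit state $y$. The sub-path of $\mathcal{T}^{*}$ from $x$ to $y$ runs through the excised corner and therefore uses at least $d(x,y)$ steps, whereas the portion of the shortcut between $x$ and $y$ is a geodesic and uses exactly $d(x,y)$ steps. Replacing the excursion of $\mathcal{T}^{*}$ by this shortcut portion yields a new $B$-to-$G$ path $\mathcal{T}'$ that stays inside $\mathcal{C}$ across this corner and is no longer than $\mathcal{T}^{*}$. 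Because every immediate reward satisfies $r\le 0$ in the navigation setting, a path of no greater length has value no smaller; formally this is the inequality of Theorem~1 applied to the segment between $x$ and $y$ (with $x$ playing the role of $s$ and $y$ set as the terminal state), giving $v_{\mathcal{T}'}(B)\ge v_{\mathcal{T}^{*}}(B)$.

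Finally, optimality of $\mathcal{T}^{*}$ forces $v_{\mathcal{T}'}(B)=v_{\mathcal{T}^{*}}(B)$, so $\mathcal{T}'$ is again optimal but avoids this corner; iterating the replacement over the finitely many excised corners produces an optimal path contained in $\mathcal{C}$, which is the desired conclusion. The main obstacle I expect is the equality case: in a hexagonal grid the step metric is not strictly convex, so an excursion into a corner can in principle have exactly the geodesic length, i.e.\ a tie, which leaves open whether \emph{every} optimal path (as opposed to some optimal path) lies in $\mathcal{C}$. I would close this gap either by invoking uniqueness of the optimal path $\mathcal{T}^{*}$, so that $\mathcal{T}^{*}=\mathcal{T}'\subseteq\mathcal{C}$, or by observing that a genuine excursion that re-crosses the boundary strictly increases the number of steps relative to the geodesic shortcut whenever $r<0$, so that $\mathcal{T}'$ is strictly better and the excursion is impossible.
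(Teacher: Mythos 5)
Your proposal follows essentially the same strategy as the paper's proof: assume $\mathcal{T}^{*}\nsubseteq\mathcal{C}$, use the fact that the start $B$ and goal $G$ lie on $\mathcal{C}$ to extract two crossing points of $\mathcal{T}^{*}$ with the reduced boundary, perform path surgery (replace the portion of $\mathcal{T}^{*}$ that leaves $\mathcal{C}$ by a segment of the reduced boundary), and invoke Theorem~1 to compare values, contradicting optimality. The differences are local. The paper takes the two crossing points to be $D$ and the goal $G$, forms the composite policy $\pi_2$ ($\mathcal{T}^{*}$ from $B$ to $D$, then $\mathcal{T}_l^K$ from $D$ to $G$), and asserts, ``according to the reduction rule,'' the \emph{strict} inequality $j<i$ between the boundary and excursion lengths, which via Theorem~1 gives the strict contradiction $v_{\pi_2}(B)>v_{\pi^*}(B)$. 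You instead excise each corner excursion locally between its entry and exit states $x,y$ on the shortcut, and you derive only the weak inequality (excursion length $\ge d(x,y)=$ shortcut length), since a sub-segment of a geodesic is a geodesic.

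The tie issue you flag is a genuine gap, and it is exactly the step the paper does not justify: its $j<i$ is asserted, not proved, and your ``a genuine excursion strictly increases the number of steps'' is the same unproved assertion. In a hex grid it can fail: two cells at step distance $K$ generally admit several geodesics (the paper itself notes there are $2^K-1$ $K$-step paths between them), and nothing in Algorithm~2 prevents one of the alternative geodesics from passing through the excised corner; in that case an optimal path can leave $\mathcal{C}$ with no loss of value, and the weak inequality cannot be upgraded to a strict one. Your first proposed fix (uniqueness of the optimal path) is also unavailable, since grid navigation problems typically admit many optimal paths. What your argument does establish rigorously is the existence version --- \emph{some} optimal path lies in $\mathcal{C}$ --- which is what the algorithm actually needs (RL restricted to $\mathcal{C}$ can still recover a globally optimal path), and is arguably all that the paper's own argument establishes once its unjustified strict inequality is weakened to the inequality you can actually prove.
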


\begin{proof}
	We assume that there is an optimal path $\mathcal{T}^{*}$, and $\mathcal{T}^{*}\nsubseteq \mathcal{C}$.
    Since both the starting and goal points are on $\mathcal{C}$, $\mathcal{T}^{*}$ and $\mathcal{T}_l^K$ (or $\mathcal{T}^{*}$ and $\mathcal{T}_r^K$) intersect at least twice.
    We assume that the two intersecting positions are $D$ and $G$, as shown in Fig.~\ref{fig:throrem}.
    The trajectory distances from $B$ to $G$ on $\mathcal{T}_l^K$ and $\mathcal{T}^{*}$ are $J$ and $I$, respectively. 
    The trajectory distances from $D$ to $G$ on $\mathcal{T}_l^K$ and $\mathcal{T}^{*}$ are $j$ and $i$, respectively. 
    Let $\pi^*, \pi_1$, and $\pi_2$ denote policies of navigating from state $B$ to state $G$ following the path on $\mathcal{T}^*$, the path on $\mathcal{T}_l^K$, and the path composed of the trajectory ${B}\rightarrow{D}$ on $\mathcal{T}^*$ and ${D}\rightarrow{G}$ on $\mathcal{T}_l^K$, respectively.
    Since $\mathcal{T}^*$ is the optimal path, we have $I < J$, $v_{\pi^*}(B) > v_{\pi_1}(B)$, and $v_{\pi^*}(B) > v_{\pi_2}(B)$.
    According to the reduction rule, we have $j<i$.
    Further, we have $v_{\pi_2}(D) > v_{\pi^*}(D)$ according to Theorem~1.
    Since $\pi^*$ and $\pi_2$ share the same subpath from B to D, we have $v_{\pi_2}(B) > v_{\pi^*}(B)$.
    It is contradictory to the assumption that $\mathcal{T}^*$ is the optimal path.
    Hence, the assumption does not hold. 
    Then, for any $\mathcal{T}^*$, we have $\mathcal{T}^*\subseteq \mathcal{C}$. 
\end{proof}
\begin{figure}[tb]
	\centering
	\subfigure{\includegraphics[width=0.6\columnwidth]{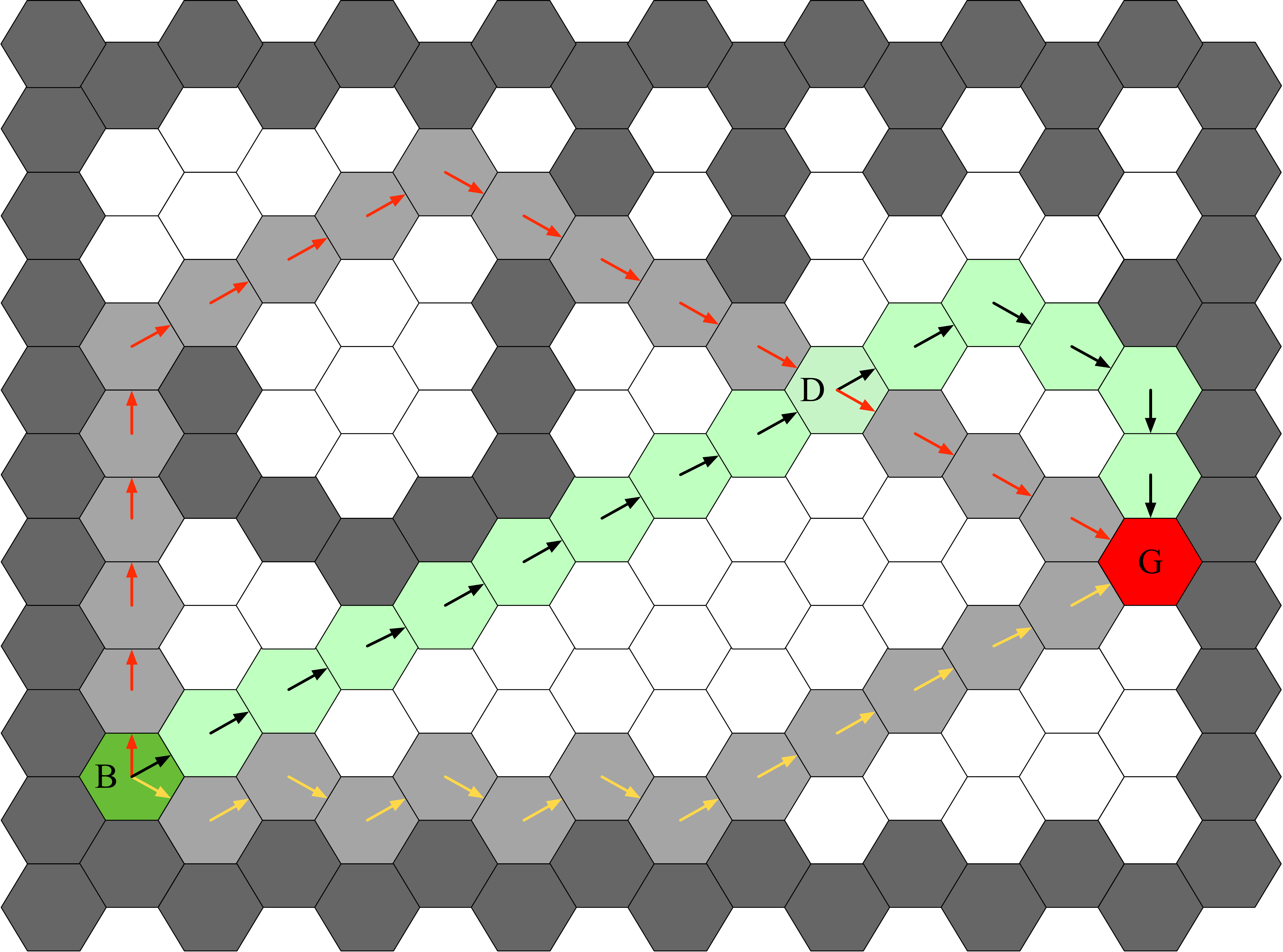}}
	\caption{An illustrative example for proving that the optimal path is still within the closed-loop $\mathcal{C}$.  
		The trajectories $\mathcal{T}_{l}^K, \mathcal{T}_{r}^K$ are indicated as red and yellow solid arrows.
		The optimal path assumed $\mathcal{T}^{*}$ is indicated as black solid arrows.
		The intersecting positions of $\mathcal{T}_{l}^K, \mathcal{T}^{*}$ are indicated as $D$ and $G$.}
	\label{fig:throrem}
\end{figure}

When the optimization step is $K$, there are $6K$ hexagon grids around the center point to be optimized.
Let $n$ denote the length of the trajectory to be optimized.
Then, the complexity of the optimization algorithm is $6nK$, i.e., $\mathrm{O}(nK)$. 
It can be observed that selecting $K$ with an enormous value will linearly increase the computational cost of trajectory reduction.
On the other hand, a larger $K$ generally leads to a smaller closed-loop trajectory, which can reduce the redundant exploration space to a more considerable extent. 
In practice, a moderate value of $K$ (e.g., $2\sim 4$) is sufficient to obtain efficient performance.

\subsection{Rule for Guiding the Early Exploration Strategy}\label{Sec3.4} 
In the reduced exploration space, we use the closed loop formed by the optimized trajectories as the new navigation environment.
While the agent can learn more efficiently in the reduced space, the agent still needs to explore many steps to find the goal point at the early stage.
We employ the Pledge rule~\cite{abelson1986turtle} to accelerate the early learning performance when the number of steps exceeds a threshold value without finding the goal. 
The counter-clockwise method of the Pledge rule is explained as follows.
Similarly, the clockwise way operates.

In the Pledge rule (counter-clockwise), the agent first needs to choose an initial action direction, and moves towards this action direction with priority.
Next, to meet the priority of the initial action direction selected, we employ the sum of turns \(\theta\) to record the changes of the action direction and update the main angle view of the mobile robot by the previous action $a_{past}$.
When an obstacle is met, the sum of turns \(\theta\) is added by 1 per 60 degrees if the clockwise turn is positive, and is subtracted by 1 per 60 degrees otherwise.
Finally, for the purpose of avoiding traps, if the overall turning angle \(\theta\) is $0$, the agent will take action in the priority order of $F>LF>LR>R>RR>RF$ (defined as \(\Theta_{0}\) rule).
Otherwise, the agent will choose an action in the priority order of $RF>F>LF>LR>R>RR$, which operates in the same way as the right-hand rule in Section \ref{Sec3.2}. 
By recording the sum of the turns $\theta$ and keeping the initial direction, the Pledge algorithm can find the goal point, regardless of the initial position of the agent.
The clockwise method that counts the overall turning angle \(\theta\) is opposite to the counter-clockwise method, and the Pledge algorithm is summarized in Algorithm~\ref{pledge}.

\begin{algorithm}[tb]
\caption{Pledge Algorithm (Counter-clockwise)}
\label{pledge}
\KwIn  {Sum of turns \(\theta\); current state $s$; turns \(\theta^{'}\) at state $s$; previous action $a_{past}$; goal point $G$}
\KwOut {Action $a$; updated \(\theta\)} 
Initialize \(\theta^{'}=0\), $a_{past}$ \\
\eIf {\(\theta=0\) at state $s$} {
       Select $a$ according to \(\Theta_{0}\) rule and $a_{past}$ \\
       Calculate the turns \(\theta^{'}\)\\
       \(\theta\) += \(\theta^{'}\)\\
       \(a_{past} \leftarrow a\)}
       {
       Select $a$ according the right-hand rule and $a_{past}$ \\
       Calculate the turns \(\theta^{'}\)\\
       \(\theta\) += \(\theta^{'}\)\\
       \(a_{past} \leftarrow a\)}
\end{algorithm}

To utilize the Pledge algorithm to improve the exploration efficiency, we use the counter-clockwise method when the number of learning episodes $\eta$ is odd and employ the clockwise method otherwise.
We design a decay function for the threshold value of learning steps as 
\begin{equation}
	{E}=\frac{{M}_{max}}{\omega * \eta+b},
\label{Exe}
\end{equation}
where ${M}_{max}$ is the maximum learning steps per episode and $\eta$ is the number of the current episode. 
$b$ tunes the expected number of threshold steps in previous episodes, and $\omega$ controls the decay rate.
Generally, a smooth decay function (e.g., $\omega = 0.15 $ and $b=10$) can obtain effective improvement with the Pledge rule for guiding the agent to explore.

\subsection{Integrated RuRL Algorithm}\label{Sec3.5}
With the above implementations, the integrated RuRL for navigation algorithm is summarized in Algorithm~\ref{rurl}.
First, in the original environment, the RL agent obtains trajectories $\mathcal{T}_{l}$, $\mathcal{T}_{r}$ through left- and right-hand rules in Line 1.
Second, we employ the rule for optimizing the initial trajectories and reducing exploration space in Lines 2-3, and generate a smaller navigation environment.
Third, in the new environment with the reduced space, we employ RL to learn the optimal policies in Lines 4-21.
Finally, we employ the Pledge rule to accelerate the early learning performance for a small number of episodes when the number of steps exceeds a threshold value without finding the goal in Lines 8-13. 

\begin{algorithm}[tb]
\caption{RuRL Algorithm}
\label{rurl}
\KwIn {Learning rate $\alpha$; small \(\varepsilon\); Pledge rule usage episodes $N$; threshold steps of each episode $E$; discounting factor \(\gamma\); optimization step $K$}
\KwOut {Optimal policy \(\pi^{*}\)}
$\mathcal{T}_{l}, \mathcal{T}_{r}$ ${\leftarrow}$ Get trajectories using Algorithm 1\\
$\mathcal{T}_{l}^K, \mathcal{T}_{r}^K$ ${\leftarrow}$ Optimal the trajectories using Algorithm 2\\
Env ${\leftarrow}$ Connect two optimized trajectories $\mathcal{T}_{l}^K, \mathcal{T}_{r}^K$\\
Initialize $Q(s, a)$ arbitrarily, $\forall s \in S, a \in {A}(s)$ \\
\For (each episode){$\eta$ up to $T_{max}$} {
          Initialize $s$\\
\For (each step of episode){\(s\) is terminal} { 
\eIf {$\eta$ \(<=N\) and steps \(>=E\) using Eq.~(\ref{Exe})}  {
\eIf {$\eta$ is odd} {
  Choose \(a\) using policy derived from Pledge Rule (counter-clockwise)} {
 Choose \(a\) using policy derived from Pledge Rule (clockwise)}
} {
 Choose \(a\) using policy derived from $Q$}
Take action \(a,\) observe \(r, s^{\prime}\)\\
$Q(s,a) \!\leftarrow\! Q(s,a)\!+\!\alpha[r\!+\!\gamma\max_{a'}Q(s',a')\!-\!Q(s,a)]$ \\
\(s \leftarrow s^{\prime}\)\\
}
}
\end{algorithm}

\begin{remark} 
	When the starting point is not adjacent to the wall, the Pledge rule can solve this problem \cite{abelson1986turtle}.
	When the goal point is not adjacent to the wall, we can solve it by setting the sub-target point near the goal point.
	To highlight the usage of rules, we only consider the situation where both the starting and goal points are adjacent to the wall in this paper. 
\end{remark}

\section{Experiments}\label{Sec4}
We conduct two sets of experiments to evaluate the feasibility and effectiveness of RuRL.
One is the single-room navigation tasks consisting of the obstacle-free map and the map with obstacles.
The other is the multi-room navigation with a large and complex map, where conventional methods tend to explore inefficiently or converge to sub-optimal policies.

\subsection{Experimental Settings}\label{Sec4.1}
Since we aim to utilize rules to improve the exploration efficiency of RL in navigation domains, we focus on comparing RuRL to three baselines: RL without rules, RL with count-based exploration, and RL with UCB-based exploration~\citep{ostrovski2017count, saito2014discounted}.
We employ the learning curve and total learning steps as the performance metrics.
For single-room experiments, the Q-learning~\citep{sutton2018reinforcement} algorithm with the $\varepsilon$-greedy strategy is investigated to evaluate the effectiveness of RuRL.
Further, the Q-learning~\citep{sutton2018reinforcement} and SARSA~\citep{sutton2018reinforcement} algorithms with the $\varepsilon$-greedy and the Softmax exploration strategies are investigated in the complex task.
Besides, we compare our method with classic robot navigation methods, including A* and ACO heuristic algorithms, and utilize the length of the final planned path and the direction switching times of the path as the performance metrics.
More details can be found in~\citep{duchovn2014path} and \citep{chaari2012smartpath}.

In all environments, we use the double-width coordinate system, where the state is the 2D coordinate $(i, j)$, and available actions are: north towards $(i-2, j)$, northeast towards $(i-1, j+1)$, southeast towards $(i+1, j+1)$, south towards $(i+2, j)$, southwest towards $(i+1, j-1)$, and northwest towards $(i-1, j-1)$.
When robots collide with obstacles, they bounce to the previous position. 
For all experiments, the reward is $100$ if reaching the target, $-100$ if heading towards obstacles, and $-1$ otherwise.
The hyperparameters are the same for all tested algorithms in each group of experiments: learning rate $\alpha = 0.01$ and discount factor $\gamma = 0.99$.
Additionally, in the count-based exploration strategy, we use $N(s,a)$ to explicitly refer to the number of visits of a state-action pair in the learning process, and use an exploration bonus of the form $R^{+}(s, a)=\sqrt{\frac{\beta}{\log [\operatorname{N}(s, a)+1]}}$, where $\beta$ is set as a constant $0.4$. 
In RL with UCB-based method, we use the same exploration ratio settings as the $\varepsilon$-greedy strategy, and set the same parameters to all the tested algorithms: the damping factor $d=0.9$ and the tendency of exploration constant $C'=0.01$.
More details about parameter settings of RL with the UCB-based method can be found in~\citep{saito2014discounted}.
Additionally, the euclidean distance is used for the heuristic function in the A* algorithm.
In the ACO method, the number of ants is set as 100, and the other parameter settings can be found in~\citep{chaari2012smartpath}.
The navigation maps, constructed from real environments by a SLAM mobile robot with high-precision lidar, ultrasonic, and IMU sensors running on the ROS Kinetic platform, are hexagonally rasterized on the MATLAB simulation platform.
The environment perception control systems operate on an industrial PC (CPU: ARMv8 1.2GHz, GPU: 400MHz VideoCore IV).
All the experiments are carried out on an Intel Core i7-7700 3.60 GHz PC with 16G RAM under Windows 10.
The experimental results given are averaged over 50 runs.

\subsection{Tasks in Single-room Environments}\label{Sec4.2}
The real single-room environment is shown as in Fig.~\ref{fig:previous}(a).
Using a SLAM mobile robot, we constructed maps of the obstacle-free environment and the environment with obstacles as shown in Figs.~\ref{fig:previous}(b)-(c), respectively.
The maps are rasterized with hexagonal grids, where the gray indicates the unknown area, the black indicates obstacles, and the white indicates the feasible area. 
For both environments, the starting point is set as $(34, 17)$ with the green mark, and the goal point is set as $(3, 2)$ with the red mark.
The maximum number of learning episodes is set as $7000$, and the maximum number of learning steps per episode is set as ${M}_{max}=10000$.
The optimization step $K$ is set as 3 for reducing the exploration space.
The exploration ratio is $e^{-0.001 * \eta}(\eta<3500)$ and $0$ otherwise.
Parameters of the Pledge rule are set as $N=100$ and $E=\frac{10000}{0.2 * \eta+8}$.
Fig.~\ref{fig:result1} shows the learning curve and Table~\ref{Table2} presents corresponding numerical results.

\begin{figure}[tb]
\centering
\subfigure[Real Environment]{\includegraphics[width=0.5\columnwidth]{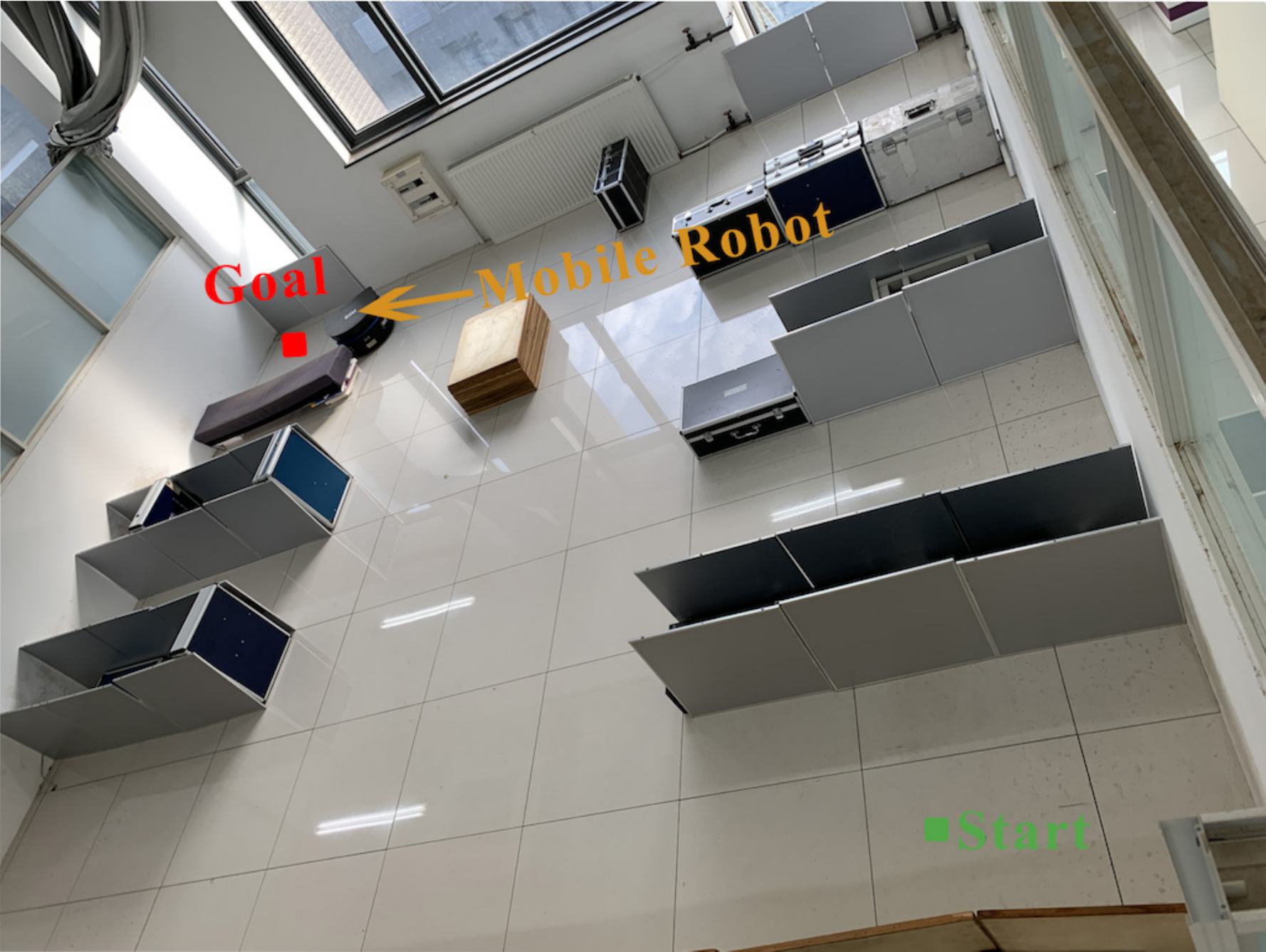}} \\
\subfigure[Env 1: obstacle-free]{\includegraphics[width=0.35\columnwidth]{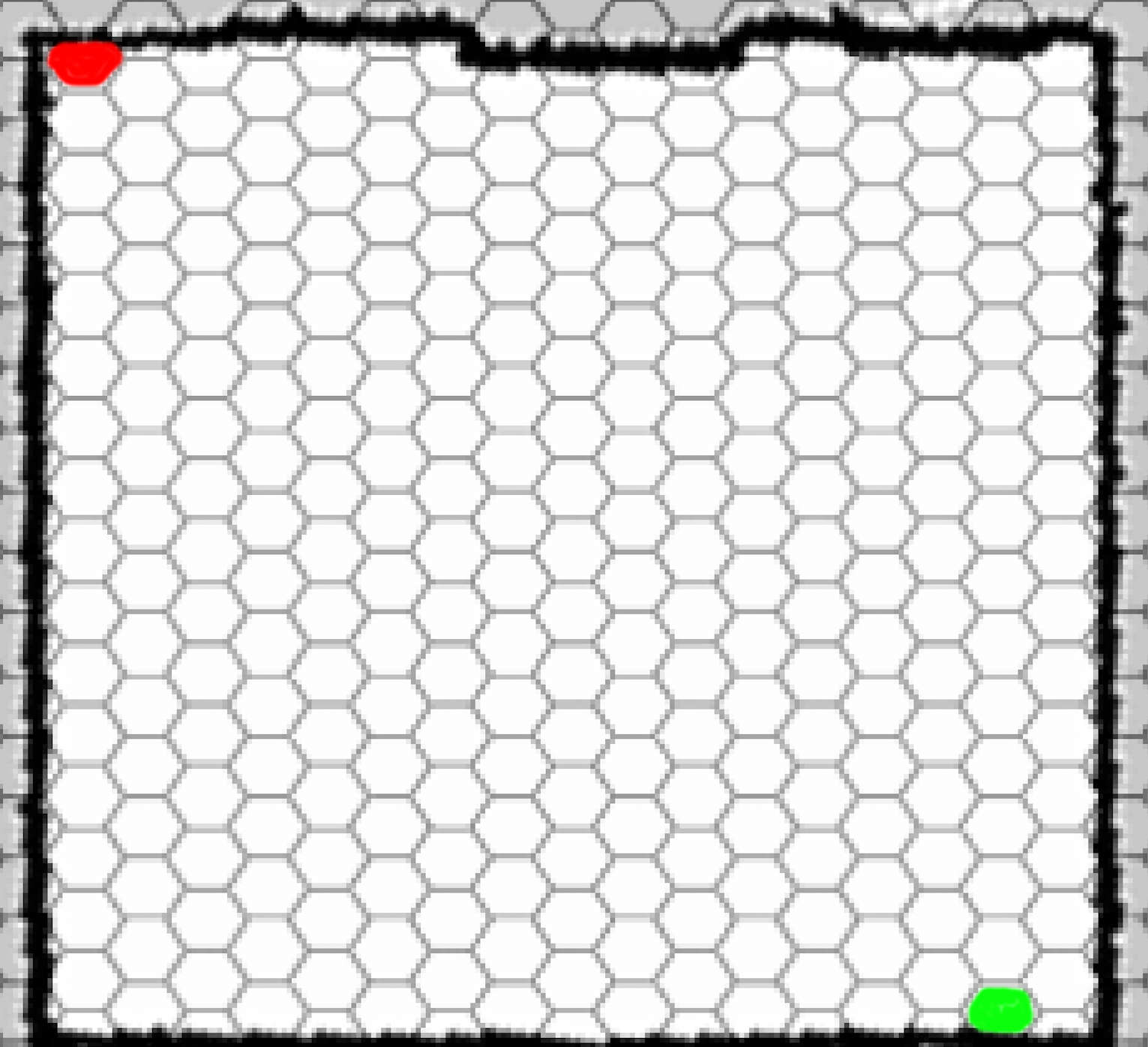}} \hspace{1em}
\subfigure[Env 2: with obstacles]{\includegraphics[width=0.33\columnwidth]{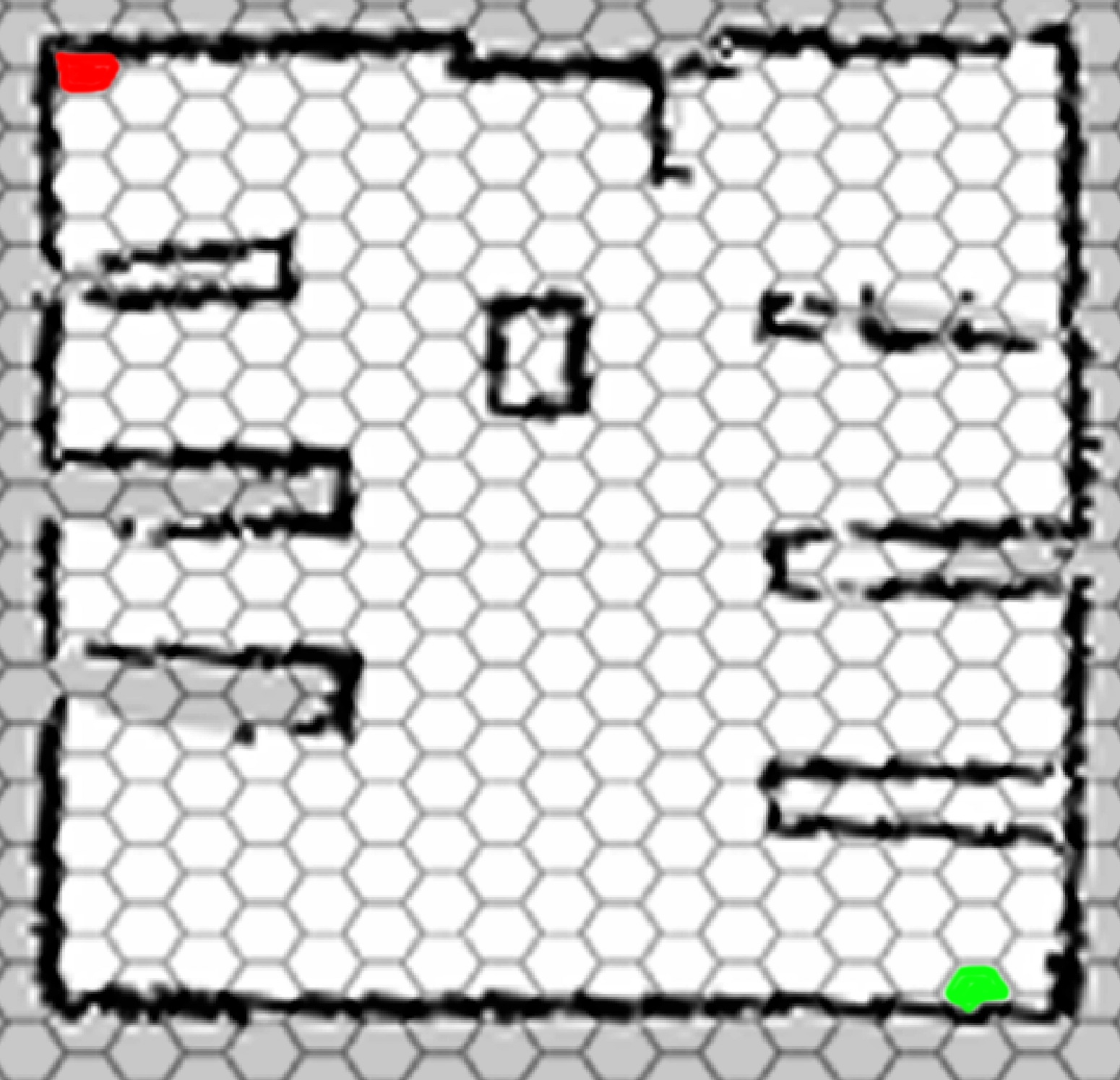}}
\caption{The physical size of the single-room navigation experimental environments is $465cm * 458cm$. Using Eq.~(\ref{equ:col}), the environment is rasterized into a hexagonal map with 35 rows and 19 columns after setting the grid length as $a=15.8cm$.}
\label{fig:previous}
\end{figure}

\begin{figure}[tb]
\centering
 \subfigure[Env 1]{\includegraphics[height=3cm]{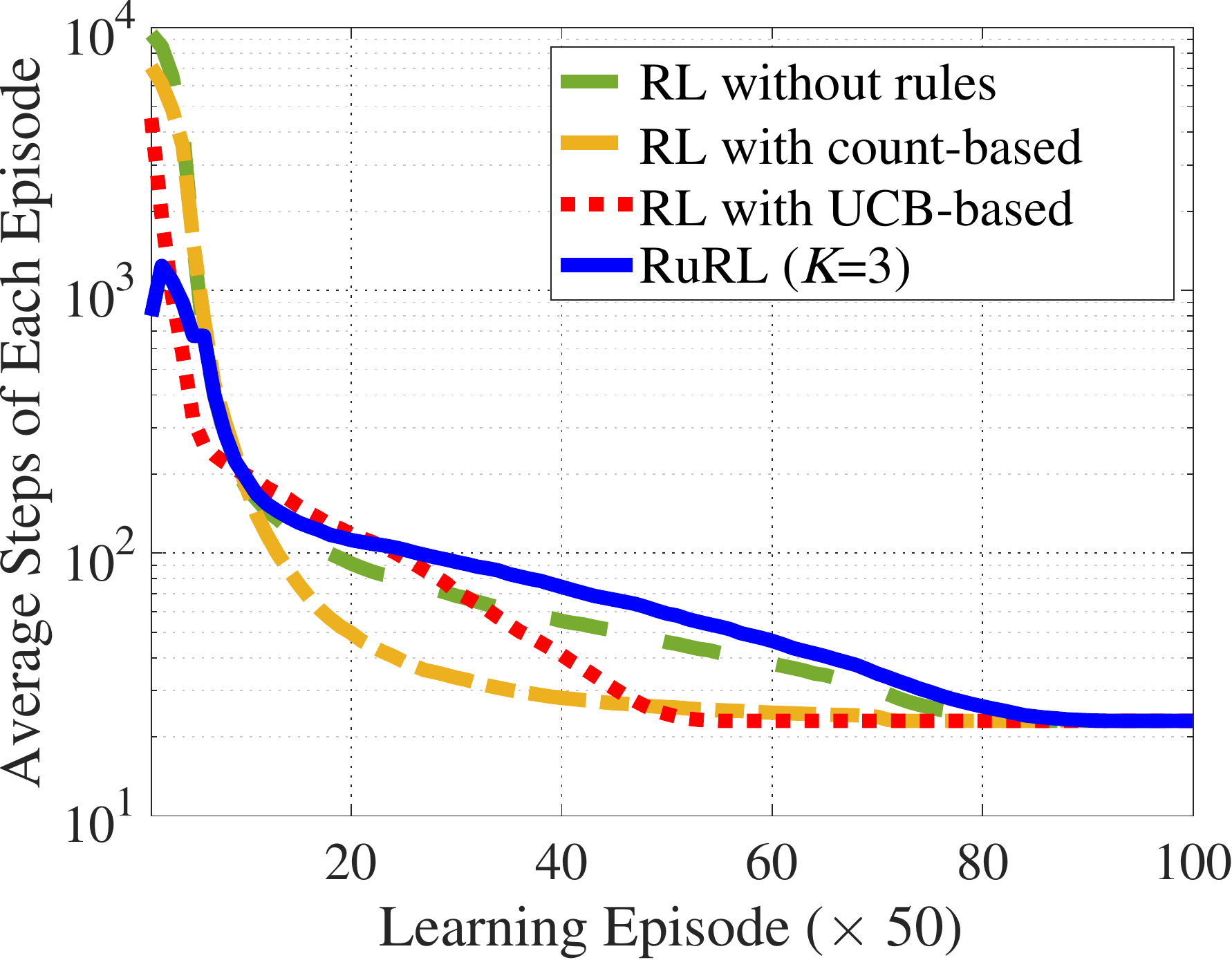}} \hspace{1em}
 \subfigure[Env 2]{\includegraphics[height=3cm]{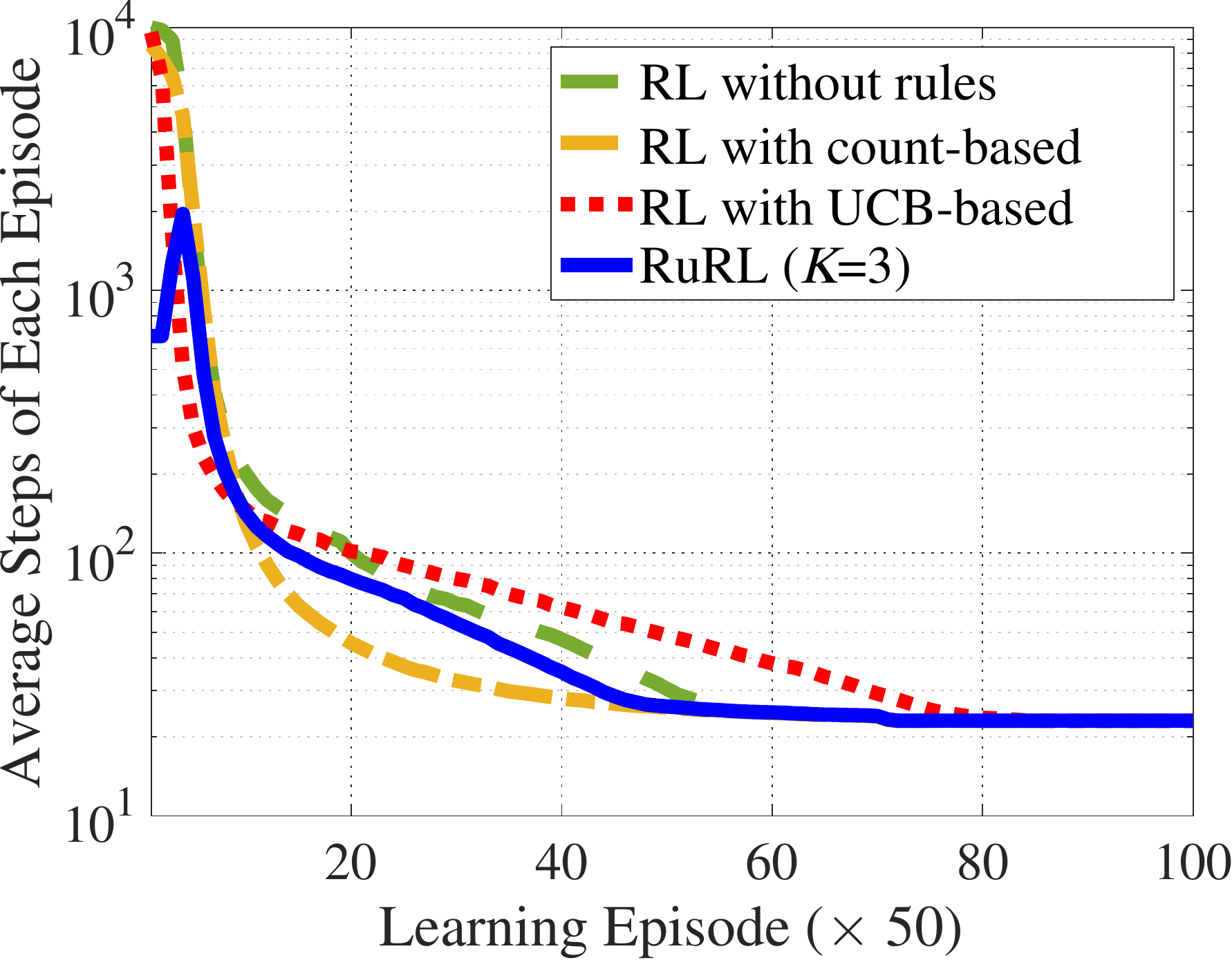}}
\caption{The average steps per episode of RL without rules, RL with count-based, RL with UCB-based and RuRL in single-room navigation tasks.}
 \label{fig:result1}
 \end{figure}

\begin{table}[tb]
	\centering
	\caption{Numerical results in terms of total learning steps of all tested algorithms in single-room tasks.}
	\begin{tabular}{l|rr}
		\toprule
		\multirow{2}{*}{Implementation algorithm} & \multicolumn{2}{c}{Q-learning ($\varepsilon$-greedy)}   \\
		& \tabincell{c}{Env 1}  
		&\tabincell{c}{Env 2}\\
		\midrule
		RL without Rules ({$\times 10^{5}$})  &18.18 & 19.06   \\
		RL with count-based ({$\times 10^{5}$})   & 14.90 & 15.60  \\
		RL with UCB-based ({$\times 10^{5}$})   & 11.27 & 11.08   \\
		RuRL ($K=3$) ({$\times 10^{5}$})     & 8.82 & 7.14   \\
		\bottomrule
	\end{tabular}
	\label{Table2}
\end{table}

First, all tested methods are implemented by the Q-learning algorithm.
The $\varepsilon$-greedy exploration strategy is used for RuRL, RL without rules, and RL with count-based.
As shown in Fig.~\ref{fig:result1}(a), all methods can find the optimal path with 23 steps in Env~$1$.
We observe that RL with count-based, RL with UCB-based and RuRL methods perform well compared with RL without rules.
The RL with count-based method demonstrates that internal reward signals positively affect the middle learning stage.
Compared to the RL without rules, RL with UCB-based approach obtains slightly superior performance with reduced regrets for efficient exploration throughout the learning process.
In contrast, RuRL obtains the best performance by improving the exploration efficiency which may benefit from the Pledge rule, where the efficiency of the rule for reducing the redundant space is minor.
Table~\ref{Table2} shows that the steps are already reduced by 51.46\% when $K = 3$ compared to RL without rules.
Under the situation that they all find the optimal path, A* and ACO algorithms need to switch directions for 11 and 13 times, respectively, while RuRL only requires 3 times.\footnote{See Appendix A in Supplementary Materials for details.}
Frequently switching directions will slow down the speed of mobile robots with more energy consumed and may be critical for the performance of the robot motion kinematics~\citep{ravankar2018path}.

Next, we test RuRL in Env 2 with obstacles, as shown in Fig.~\ref{fig:previous}(c).
From Fig.~\ref{fig:result1}(b), we can find that all tested algorithms can obtain the optimal path with 23 steps.
We also find that RuRL has greater performance improvement than that in Env 1, but the performance of RL with count-based and RL with UCB-based methods is slightly improved.
When there are more obstacles, the rules of reducing the redundant space play a more active role in enhancing exploration efficiency.
It can be obtained from Table~\ref{Table2} that the learning steps of RuRL are nearly 62.52\% less than that of RL without rules.
In general, RuRL can accelerate the learning process to a large extent, especially in the environments with obstacles.
Furthermore, the planned paths of A*, ACO, and RuRL methods need to switch directions for 11, 9, and 7 times, respectively.\footnotemark[4]

\subsection{Tasks in Multi-room Environments}\label{Sec4.3}
To further test the performance of RuRL, we use a larger and more complex navigation environment composed of multiple rooms, as shown in Fig.~\ref{fig:env3}.
The starting point and target point of Env 3 are set as $(85, 56)$ and $(3, 22)$.
The maximal numbers of learning episodes and learning steps per episode are set as 15000 and 20000, respectively. 
The bonus coefficient of count-based exploration strategy $\beta$ is set as 0.5.
Other hyperparameters corresponding to the exploration strategy and the Pledge rule can be found in Table~\ref{Table3}.

\begin{figure}[tb]
	\centering
	\includegraphics[height=3.7cm,width=6cm]{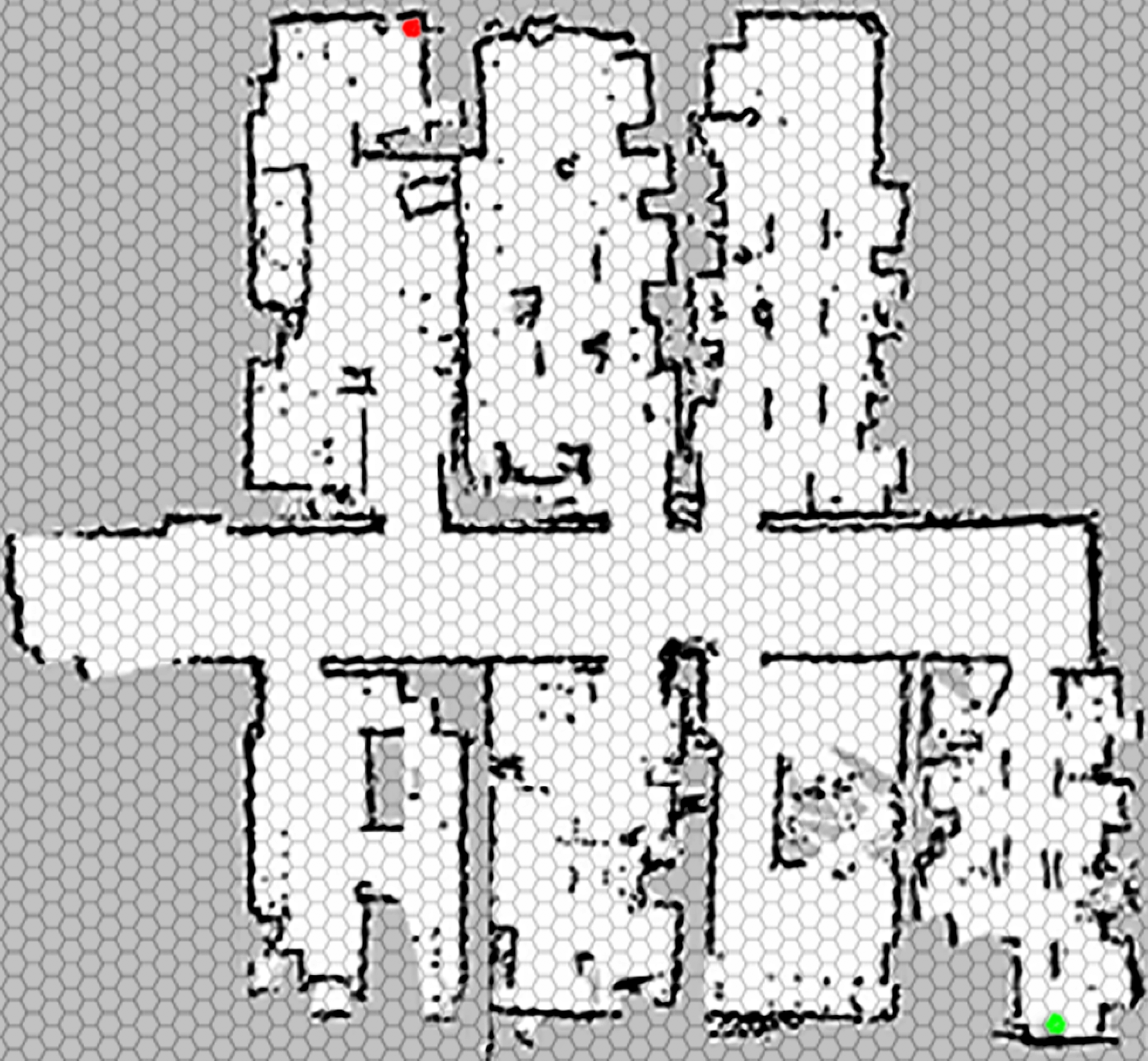}
	\caption{Env 3: the multi-room navigation environment (1640cm $\ast$ 1960cm) rasterizated into an 87$\ast$59 hexagonal grid map using Eq.~(\ref{equ:col}).}
	\label{fig:env3}
\end{figure}

\begin{table*}
\centering
\caption{Hyper-parameters in Env 3.}
\setlength{\tabcolsep}{0.26mm}
\begin{tabular}{c|cccc}
\toprule
\multirow{1}{*}{Implementation algorithm} & \multicolumn{1}{c}{Q-learning ($\varepsilon$-greedy)}& \multicolumn{1}{c} {Q-learning (Softmax)}& \multicolumn{1}{c} {SARSA ($\varepsilon$-greedy)}& \multicolumn{1}{c} {SARSA (Softmax)} \\

  \midrule
  Exploration ratio
   & { \(\left\{\begin{array}{l}{e^{-0.001 * \text {$\eta$}}(\text {$\eta$}<5000)} \\ {0}\end{array}\right.\)}
   &  {\(\left\{\begin{array}{l}{\frac{35}{\text {$\eta$} * 0.011+1}(\text { $\eta$}<3000)} \\ {1}\end{array}\right.\)} 
      &  {\(\left\{\begin{array}{l}{\frac{1}{\text {$\eta$} * 0.4}(\text { $\eta$}<500)} \\ {0}\end{array}\right.\)} 
   & {\(\left\{\begin{array}{l}{\frac{35}{\text {$\eta$} * 0.011+1}(\text { $\eta$}<3000)} \\ {1}\end{array}\right.\)} \\
  \# of episodes with Pledge rule $N$&700 &1000 &500&1000 \\
  Threshold steps of each episode $E$ & {\(\frac{20000}{0.1 *  \eta+8}\)}  
  & {\(\frac{20000}{0.1* \eta+10}\)} 
  &{\(\frac{20000}{0.1 * \eta+10}\)} 
  & {\(\frac{20000}{0.1* \eta+10}\)} \\
  \bottomrule
  \end{tabular}
  \label{Table3}
\end{table*}

In the multi-room task, the A* and ACO approaches obtain sub-optimal paths with 73 and 74 steps and need to switch directions for 29 and 27 times, respectively.
In contrast, RuRL requires only 14 times of switching directions while learning the optimal path with 72 steps, where RuRL is implemented by the Q-learning algorithm with $\varepsilon$-greedy exploration strategy.
Compared with A* and ACO algorithms, RuRL can find smoother paths with a potentially better optimality guarantee.\footnotemark[4]

Further, Fig.~\ref{fig:result3} presents the learning steps of all tested algorithms implemented by Q-learning and SARSA with the $\varepsilon$-greedy and Softmax strategies, and Table~\ref{Table4} shows corresponding numerical results.
It is clear that the number of learning steps of RL with count-based, RL with UCB-based and RuRL methods is lower than that of RL without rules. 
The RL with count-based method improves the performance with the $\varepsilon$-greedy strategy, and obtains sub-optimal policy with 74 steps due to paying more attention to visited states. 
However, it obtains reduced performance improvement with the Softmax strategy. 
Compared to the single-room tasks, the improvement of RL with UCB-based is reduced to $1.18\%$ since the regret scales linearly in the dimension of the state space.
In contrast, RuRL enables the agent to make quicker progress than the others on finding the optimal policy.
Taking the implementation of Q-learning with $\varepsilon$-greedy strategy as an example, the total learning steps are reduced by 71.79\% when using the proposed rules with $K = 3$.
Consistent with observations in Section \ref{Sec4.2}, RuRL better improves the learning performance in multi-room tasks, which is supposed to benefit from the distinct space reduction and the Pledge rule for finding the goal with fewer steps.

\begin{table*}
	\centering
	\caption{Numerical results in terms of total learning steps of all tested algorithms in multi-room tasks (Env 3).}
	\renewcommand\arraystretch{1.0}
	\begin{tabular}{l|ccccc}
		\toprule
		\multirow{2}{*}{Implementation algorithm} & RL without rules & RL with count-based & RL with UCB-based & RuRL~($K=3$) & Maximal reduction \\
		& ({$\times 10^{7}$}) & ({$\times 10^{7}$}) & ({$\times 10^{7}$}) & ({$\times 10^{6}$}) & (\%) \\
		\midrule
		Q-learning ($\varepsilon$-greedy)  &1.69 & 1.17  & \multirow{2}{*}{1.67}   & 4.77   & \bf{71.79\%}\\
		
		Q-learning (Softmax)                      &2.36 & 2.51 &  & 7.08 & \bf{69.93\%}\\
		
		SARSA ($\varepsilon$-greedy)      &1.23 & 0.79  & \multirow{2}{*}{1.24}   & 4.72  & \bf{61.69\%}\\
		
		SARSA (Softmax)                          &2.78 & 1.99  &   & 7.58 & \bf{72.78\%}\\
		\bottomrule
	\end{tabular}
	\label{Table4}
\end{table*}

In addition, we analyze the learning performance of RuRL as the optimization step $K$ increases.
We also adopt a statistical approach to analyze the relationship between the number of episodes $N$ with the Pledge rule employed and the performance of RuRL (see Appendix B and Appendix C in Supplementary Materials for details).

\begin{figure}[tb]
	\centering
	\subfigure[Q-learning with $\varepsilon$-greedy]{\includegraphics[height=3cm]{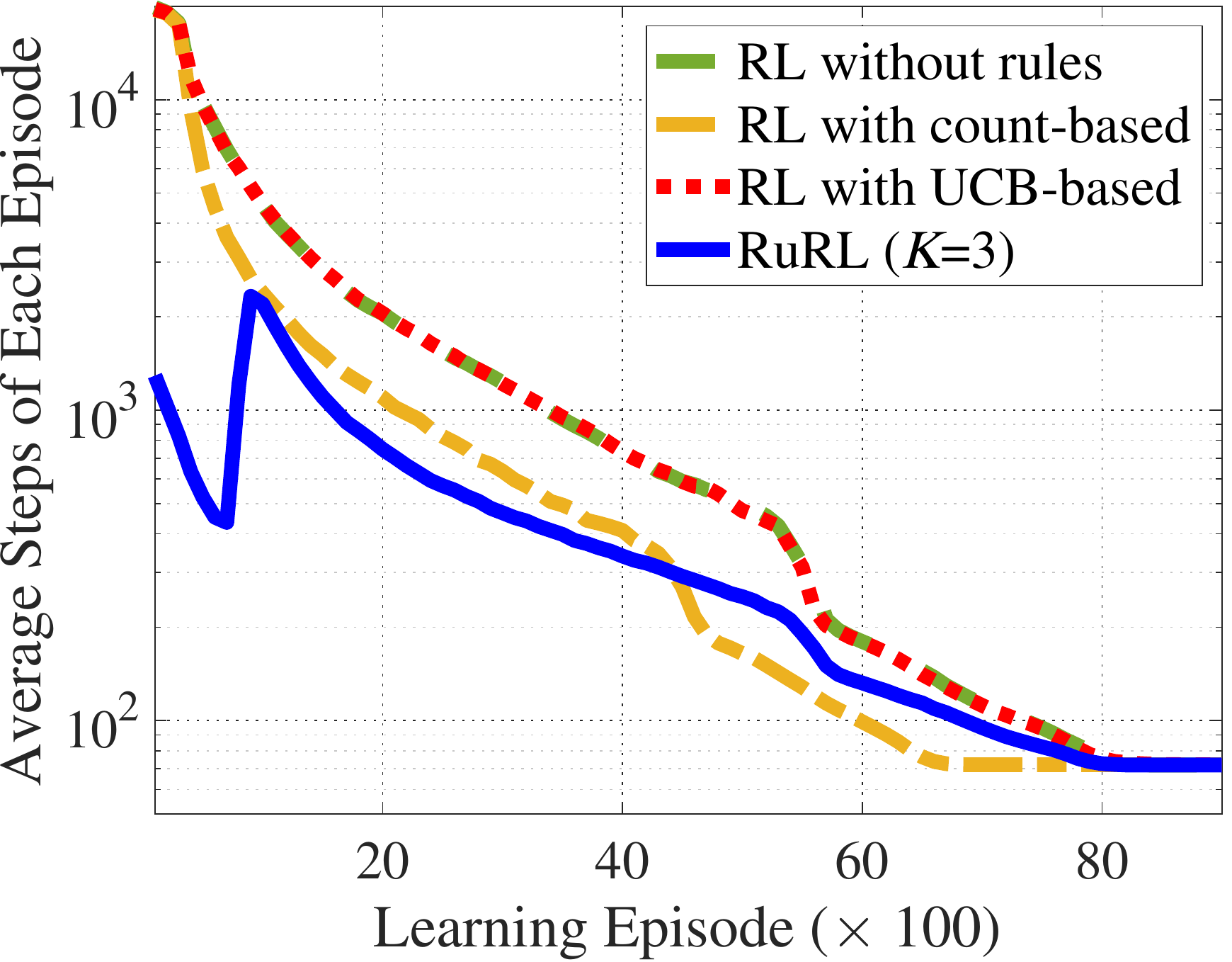}}\hspace{1em}
	\subfigure[Q-learning with Softmax]{\includegraphics[height=3cm]{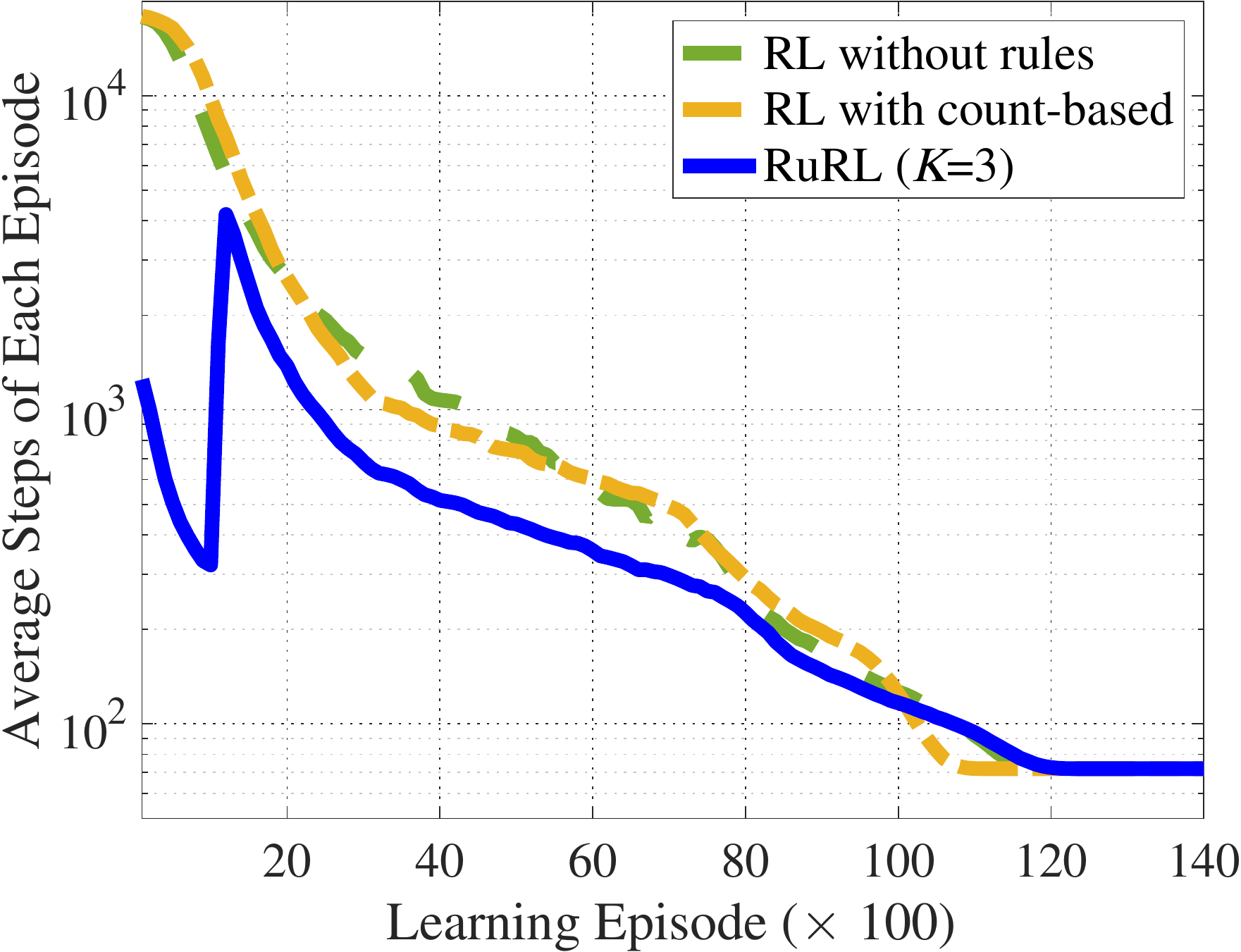}}     
	\subfigure[SARSA with $\varepsilon$-greedy]{\includegraphics[height=3cm]{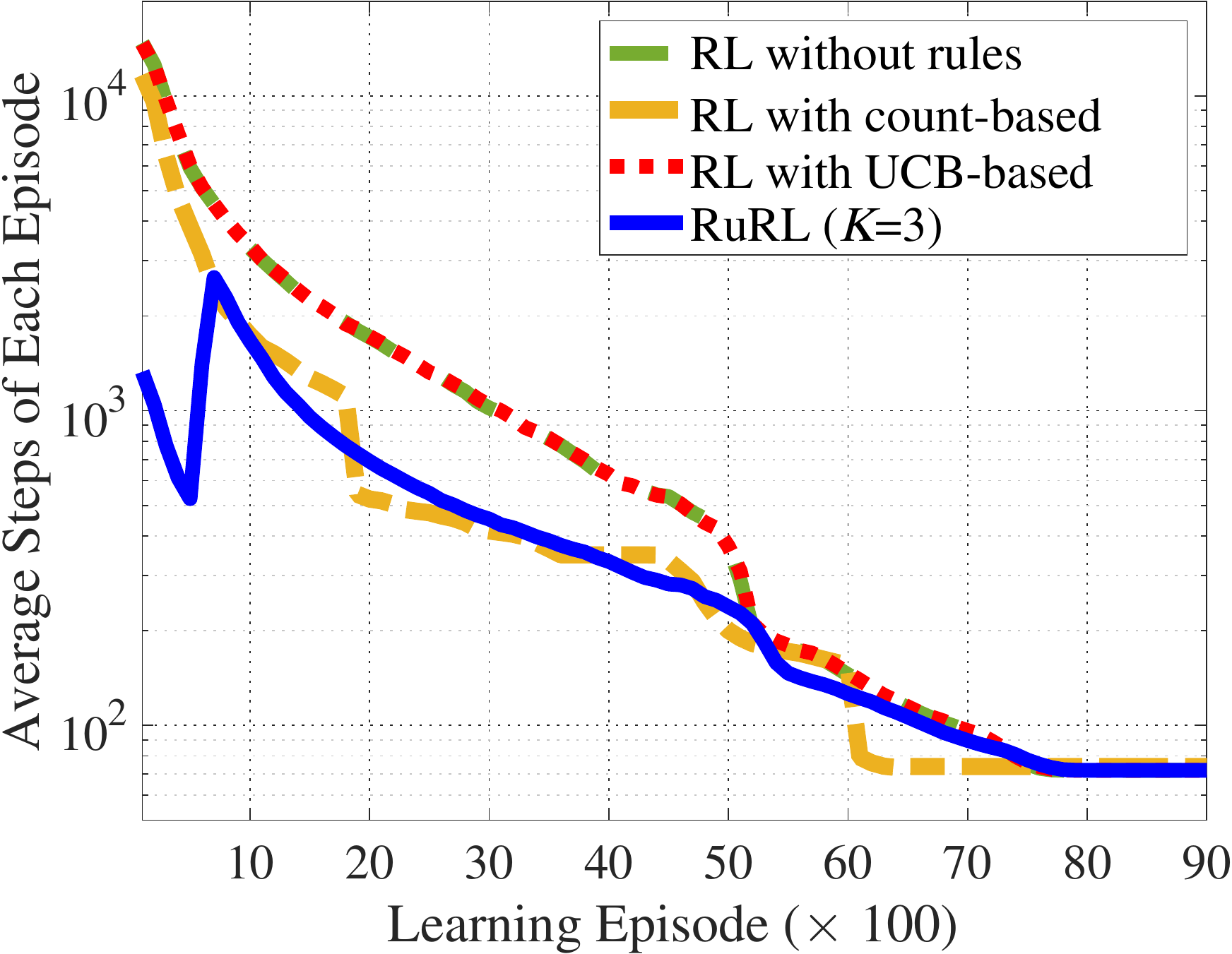}}\hspace{1em}
	\subfigure[SARSA with Softmax]{\includegraphics[height=3cm]{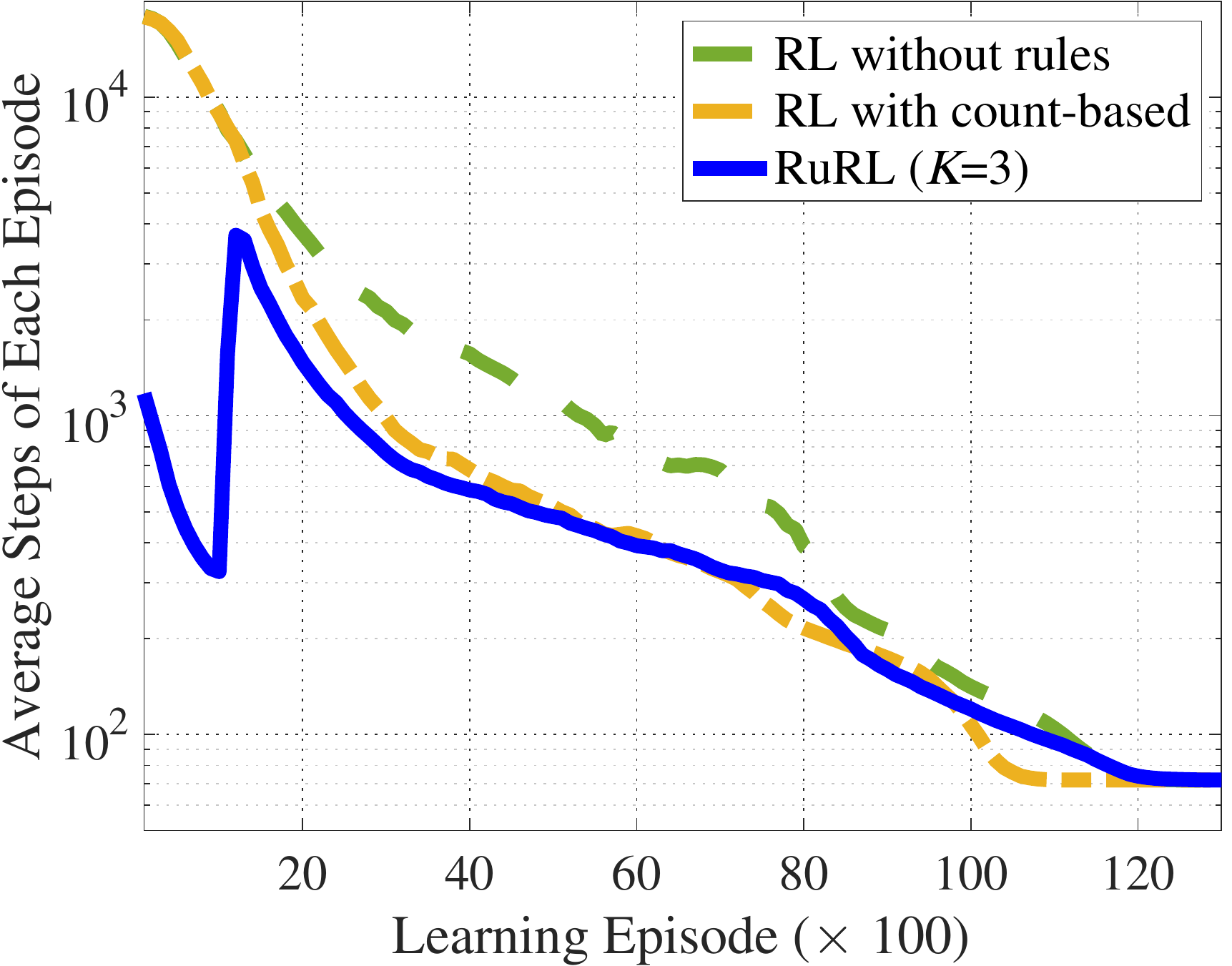}}
	\caption{The performance of all tested methods implemented by Q-learning and SARSA in the multi-room environment. The $\varepsilon$-greedy and Softmax strategies are used for RuRL, RL with count-based, and RL without rules.}
	\label{fig:result3}
\end{figure}

\section{Conclusion}\label{Sec5}
In this paper, we propose a rule-based RL (RuRL) algorithm for efficient robot navigation with space reduction, where three rules are applied to reduce the redundant exploration space and guide the exploration strategy.
Then, we evaluate RuRL on the single-room environments and a multi-room environment, where the maps are built using a SLAM mobile robot.
Experimental results demonstrate that RuRL can efficiently improve the navigation performance with good scalability.
Our future work will focus on more practical rules for advanced RL methods in the field of complex robotic control.

\footnotesize
\bibliography{reference}
\bibliographystyle{IEEEtranN}

\begin{IEEEbiography}[{\includegraphics[width=1in,height=1.25in,clip,keepaspectratio]{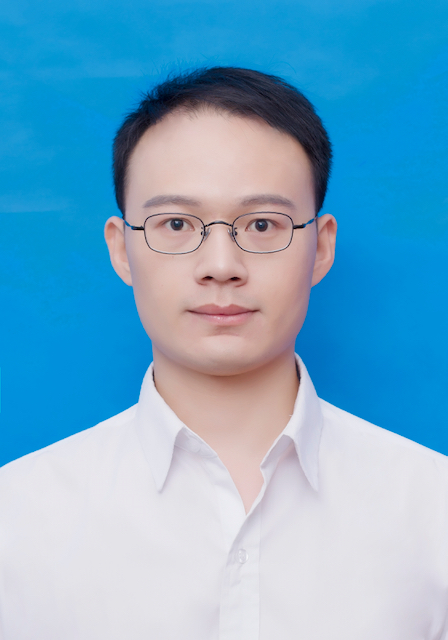}}]{Yuanyang Zhu}
	 received the B.E. degree in automation from the Department of Automation, Huaiyin Institute of Technology, Huai'an, China, in 2017, and the M.S. degree in the Department of Control and Systems Engineering, the School of Management and Engineering, Nanjing University, Nanjing, China, in 2020, where he is currently pursuing the Ph.D. degree.
	 His current research interests include reinforcement learning, machine learning, and robotics.
\end{IEEEbiography}

\begin{IEEEbiography}[{\includegraphics[width=1in,height=1.25in,clip,keepaspectratio]{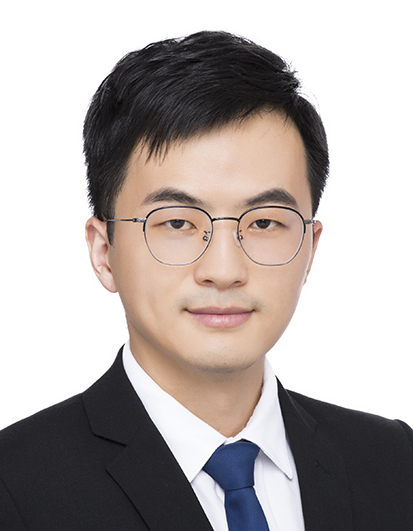}}]{Zhi Wang}
	(S'19-M'20) received the Ph.D. degree in machine learning from the Department of Systems Engineering and Engineering Management, City University of Hong Kong, Hong Kong, China, in 2019, and the B.E. degree in automation from Nanjing University, Nanjing, China, in 2015.
	He is currently an Assistant Professor in the Department of Control and Systems Engineering, Nanjing University.
	His current research interests include reinforcement learning, machine learning, and robotics.
\end{IEEEbiography}

\begin{IEEEbiography}[{\includegraphics[width=1.0in,height=1.25in,clip,keepaspectratio]{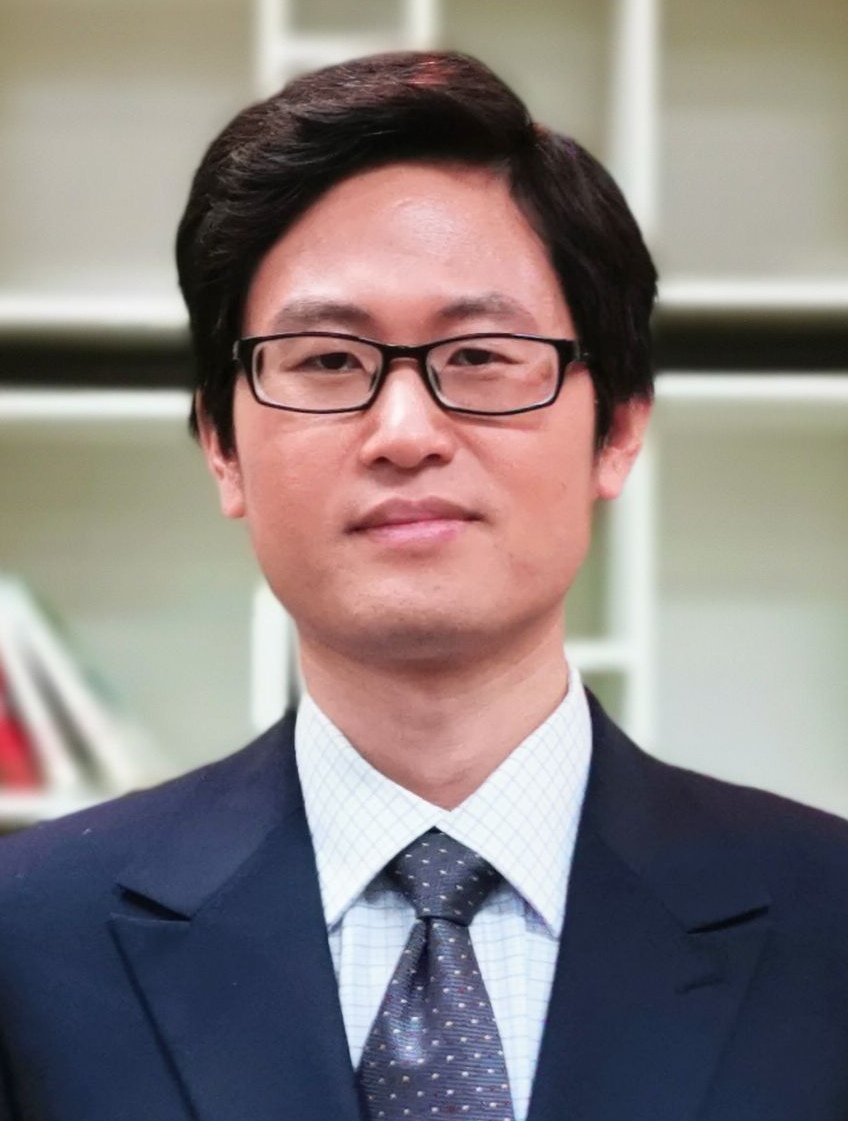}}]{Chunlin Chen}
	(S'05-M'06-SM'21) received the B.E. degree in automatic control and Ph.D. degree in control science and engineering from the University of Science and Technology of China, Hefei, China, in 2001 and 2006, respectively.
	He is currently a professor and the chair of the Department of Control and Systems Engineering, Nanjing University, Nanjing University.
	
	His current research interests include machine learning, intelligent control and quantum control.
	He is Chair of Technical Committee on Quantum Cybernetics, IEEE Systems, Man and Cybernetics Society.
\end{IEEEbiography}

\begin{IEEEbiography}[{\includegraphics[width=1.0in,height=1.25in,clip,keepaspectratio]{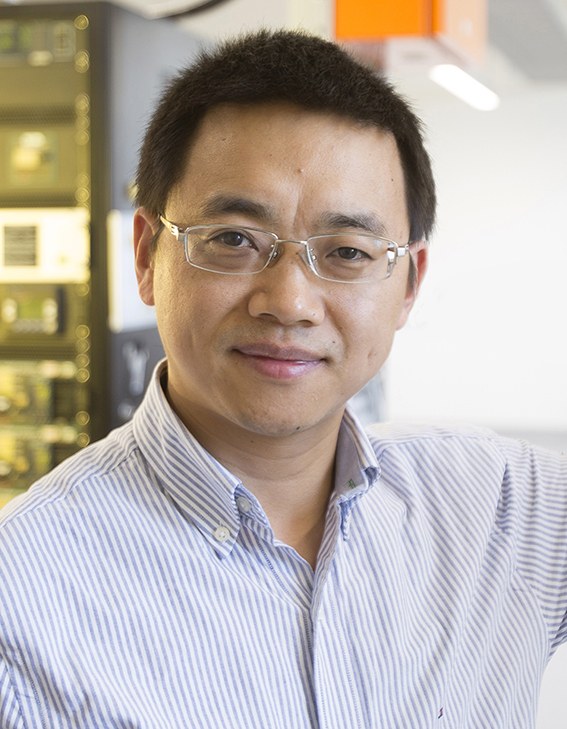}}]{Daoyi Dong}
	(S'05-M'06-SM'11) is currently a Scientia Associate Professor  at the University of New South Wales, Canberra, Australia. 
	He received a B.E. degree and a Ph.D. degree in engineering from the University of Science and Technology of China, Hefei, China, in 2001 and 2006, respectively. 
	
	His research interests include machine learning and quantum cybernetics. He was awarded an ACA Temasek Young Educator Award by The Asian Control Association and is a recipient of an International Collaboration Award, Discovery International Award and an Australian Post-Doctoral Fellowship from the Australian Research Council, and Humboldt Research Fellowship from Alexander von Humboldt Foundation in Germany.
	He serves as an Associate Editor of IEEE Transactions on Neural Networks and Learning Systems, and a Technical Editor of IEEE/ASME Transactions on Mechatronics.	
\end{IEEEbiography}

\clearpage 
\includepdf[pages=-]{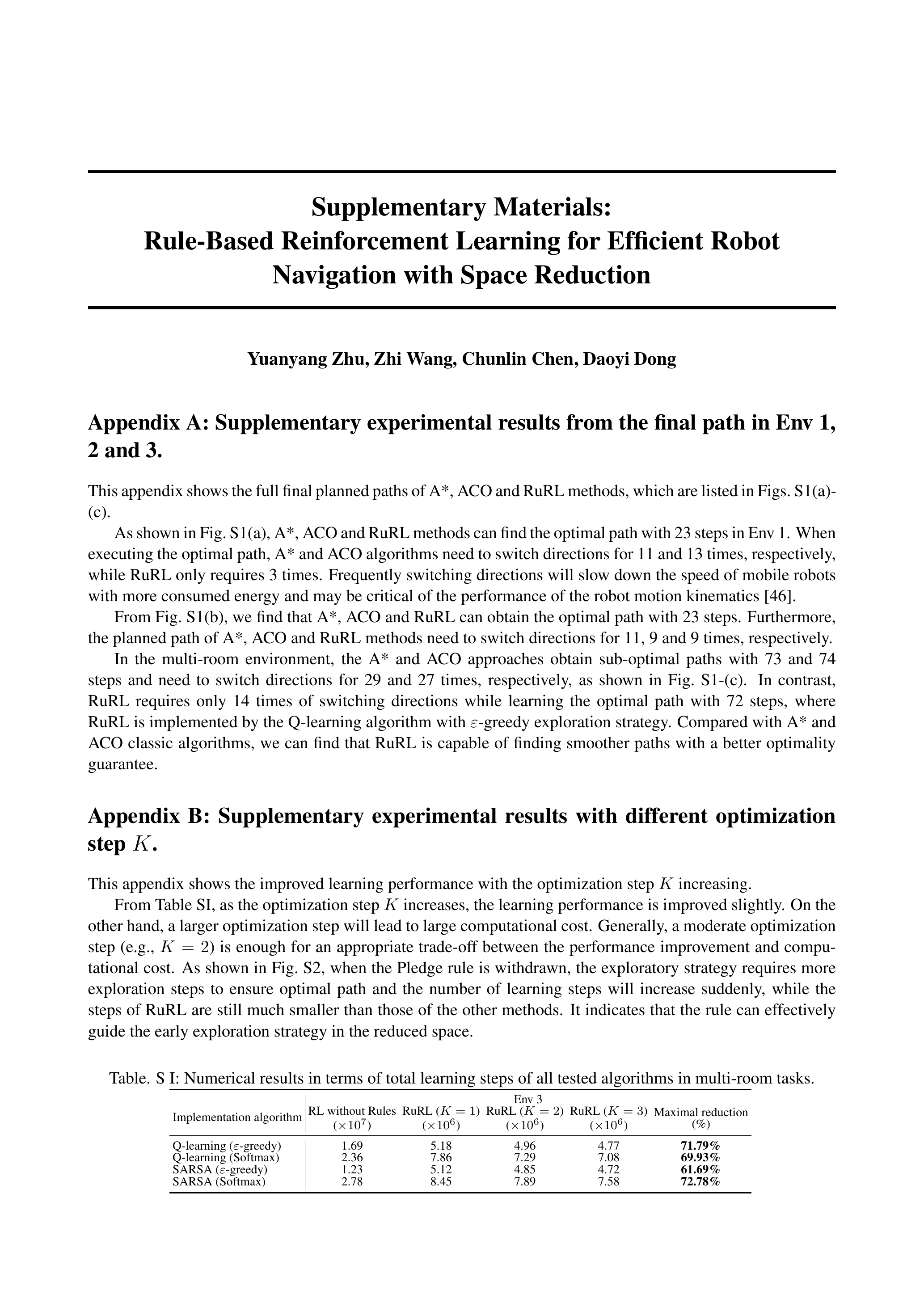}

\end{document}